\pgfplotsset{compat=newest}
\newcommand{\abs}[1]{\left|#1\right|}
\newcommand{\esp}[1]{\mathbb{E}\left[#1\right]}
\newcommand{\proba}[1]{\mathbb{P}\left[#1\right]}
\newcommand{\espn}[1]{\mathbb{E}_{n-1}\left[#1\right]}
\newcommand{\esps}[2]{\mathbb{E}_{#1}\left[#2\right]}
\newcommand{\norm}[1]{\left\|#1\right\|}
\newcommand{\sqdif}[1]{\left(\nabla#1\right)^2}
\newcommand{\reel}{\mathbb{R}}
\newcommand{\nat}{\mathbb{N}}
\newcommand{\ee}{\mathrm{e}}
\newcommand{\tv}{\tilde{v}}
\newcommand{\dd}{\mathrm{d}}
\newcommand{\eps}{\epsilon}
\newcommand{\sqep}[1]{\sqrt{\epsilon + #1}}
\newcommand{\bigo}[1]{O(#1)}
\newcommand{\deq}{=}%\tr    iangleq}
\newcommand{\bbratio}{\left(\frac{\beta_1}{\beta_2}\right)}
\newcommand{\tN}{\tilde{N}}
\DeclareMathOperator{\huber}{\mathrm{Huber}}
\newcommand{\breakin}{\nonumber\\&\qquad}
\newtheorem{lemma}{Lemma}
\numberwithin{lemma}{section} 
\newtheorem{theorem}{Theorem}
\title{A Simple Convergence Proof of Adam and Adagrad}
\author{\name  Alexandre D\'efossez \email defossez@meta.com \\
      \addr Meta AI
      \AND
      \name L\'eon Bottou \\
      \addr Meta AI
      \AND
      \name Francis Bach\\
      \addr INRIA / PSL
      \AND
      \name Nicolas Usunier\\
      \addr Meta AI}
\newcommand{\francis}[1]{}
\newcommand{\nico}[1]{}
\newcommand{\leon}[1]{}
\newcommand{\alex}[1]{}
\newcommand{\todo}[1]{}
\newcommand{\francis}[1]{{\color{magenta} F: #1}}
\newcommand{\nico}[1]{{\color{blue} N: #1}}
\newcommand{\leon}[1]{{\color{green} L: #1}}
\newcommand{\alex}[1]{{\color{red} A: #1}}
\newcommand{\todo}[1]{{\color{red}#1}}
\begin{document}

\maketitle

\begin{abstract}
We provide a simple proof of convergence covering both the Adam and Adagrad adaptive optimization algorithms when applied to smooth (possibly non-convex) objective functions with bounded gradients. We show that in expectation, the squared norm of the objective gradient averaged over the trajectory has an upper-bound which is explicit in the constants of the problem, parameters of the optimizer, the dimension $d$, and the total number of iterations $N$. 
This bound can be made arbitrarily small, and with the right hyper-parameters, Adam can be shown to converge with the same rate of convergence $O(d\ln(N)/\sqrt{N})$. When used with the default parameters, Adam doesn't converge, however, and just like constant step-size SGD, it moves away from the initialization point faster than
Adagrad, which might explain its practical success.
Finally, we obtain the tightest dependency on the heavy ball momentum decay rate $\beta_1$ among all previous convergence bounds for non-convex Adam and Adagrad,
improving from $O((1-\beta_1)^{-3})$ to $O((1-\beta_1)^{-1})$. 
% Our technique also improves the best known dependency for standard SGD by a factor $1 - \beta_1$.
% Our technique also extends to SGD with heavy-ball momentum, recovering existing convergence rates with tighter rates.
\end{abstract}

\section{Introduction}
\label{sec:intro}
\looseness=-1
First-order methods with adaptive step sizes have proved useful in many fields of machine learning, be it for sparse optimization~\citep{sparse_adagrad}, 
tensor factorization~\citep{lacroix2018canonical} or deep learning~\citep{goodfellow2016deep}. 
\citet{adagrad} introduced Adagrad, which rescales each coordinate by a sum of squared past gradient values. While Adagrad proved effective for sparse optimization~\citep{sparse_adagrad}, experiments showed that it
under-performed when applied to deep learning~\citep{marginal_value}. 
RMSProp~\citep{rmsprop} proposed an exponential moving average instead of a cumulative sum to solve this. \citet{adam} developed Adam, one of the most popular adaptive methods in deep learning, built upon RMSProp and added corrective terms at the beginning of training, together with heavy-ball style momentum.

In the online convex optimization setting, \citet{adagrad} showed that Adagrad achieves optimal regret for online convex optimization. 
\citet{adam} provided a similar proof for Adam when using a decreasing overall step size, although this proof was later shown to be incorrect by \citet{adam_doesnt_converge}, who introduced AMSGrad as a convergent alternative.
\citet{ward_adagrad} proved that Adagrad also converges to a critical point for non convex objectives with a rate $\bigo{\ln(N)/\sqrt{N}}$ when using a scalar adaptive step-size, instead of diagonal. \citet{adagrad_proof_vector} extended this proof to the vector case, while \citet{zou2019sufficient} displayed a bound for Adam, showing convergence when the decay of the exponential moving average scales as $1-1/N$ and the learning rate as $1/\sqrt{N}$.
% Moreover, compared to plain stochastic gradient descent, adaptive algorithms are known to be less sensitive to hyperparameter setting. The theoretical results above confirm this observation by showing the convergence for a step size parameter that does not depend on the regularity parameters of the objective function or the bound on the variance of the stochastic gradients.

In this paper, we present a simplified and unified proof of convergence to a critical point for Adagrad and Adam for stochastic non-convex smooth optimization. We assume that 
the objective function is lower bounded, smooth and the stochastic gradients are almost surely bounded.
We recover the standard $\bigo{\ln(N)/\sqrt{N}}$ convergence rate for Adagrad for all step sizes, and the same rate with Adam with an appropriate choice of the step sizes and decay parameters, in particular, Adam can converge without using the AMSGrad variant.
Compared to previous work, our bound significantly improves the dependency on the momentum parameter $\beta_1$. The best known bounds for Adagrad and Adam are respectively in $\bigo{(1-\beta_1)^{-3}}$ and $\bigo{(1-\beta_1)^{-5}}$ (see Section \ref{sec:relw}), while our result is in $\bigo{(1-\beta_1)^{-1}}$ for both algorithms.
% Our proof technique for heavy-ball momentum can also
% be applied to plain SGD, and improves
% the dependency on $1-\beta_1$ from a ${-}2$ to a ${-}1$ exponent~\citep{yang2016unified}.
This improvement is a step toward
understanding the practical efficiency of heavy-ball momentum.

% Another important contribution of this work is a significantly simpler proof than previous ones. The reason is that in our approach, the main technical steps are carried out jointly for Adagrad and Adam with constant parameters, while previous attempts at unified proofs required varying parameters through the iterations  \cite{chen2018convergence,zou2019sufficient,adagrad_proof_vector}.

\paragraph{Outline.} The precise setting and assumptions are stated in the next section, and previous work is then described in Section \ref{sec:relw}. The main theorems are presented in Section \ref{sec:main_results}, followed by a full proof for the case without momentum in Section \ref{sec:proofs}. The proof of the convergence with momentum is deferred to the supplementary material, Section \ref{app:heavy}.
Finally we compare our bounds with experimental results, both on toy and real life problems in Section~\ref{sec:experiments}.

\section{Setup}
\subsection{Notation}
\label{sec:notations}
Let $d \in \nat$ be the dimension of the problem (i.e. the number of parameters of the function to optimize) and take $[d] \deq \{1, 2, \ldots, d\}$.
Given a function $h: \reel^d \rightarrow \reel$, we denote by $\nabla h$ its gradient and $\nabla_i
h$ the $i$-th component of the gradient.
We use a small constant $\epsilon$, e.g. $10^{-8}$, for numerical
stability.
Given a sequence $(u_n)_{n\in\nat}$ with $\forall n\in\nat, u_n\in\reel^d$, we denote $u_{n, i}$
for $n\in\nat$ and $i\in [d]$ the $i$-th component of the $n$-th element of the sequence.

We want to optimize a function $F : \reel^d \rightarrow \reel$.
We assume there exists a random function $f: \reel^d \rightarrow \reel$ such that
$\esp{\nabla f(x)} = \nabla F(x)$ for all $x \in \reel^d$, and that we have access to an oracle providing i.i.d.~samples $(f_n)_{n\in \nat^*}$.
We note $\espn{\cdot}$ the conditional expectation knowing $f_1, \ldots, f_{n-1}$.
In machine learning, $x$ typically represents the weights of a linear or deep
model, $f$ represents
the loss from individual training examples or minibatches, and $F$ is the full training objective function. 
The goal is to find a critical point of $F$.

\subsection{Adaptive methods}
\label{sec:adamethods}
We study both Adagrad~\citep{adagrad} and
Adam~\citep{adam} using a unified formulation.
We assume we have $0 < \beta_2 \leq 1$, $0 \leq \beta_1 < \beta_2$, and a non negative sequence $(\alpha_n)_{n\in\nat^*}$. 
We define three vectors $m_n, v_n, x_n \in \reel^d$ iteratively.
Given $x_0 \in\reel^d$ our starting point, $m_0=0$, and $v_0=0$, we define for all iterations $n \in\nat^*$,
\begin{align}
    \label{eq:m_update}
    m_{n, i} &= \beta_1 m_{n-1, i} + \nabla_i f_n(x_{n-1})\\
    \label{eq:v_update}
    v_{n, i} &= \beta_2 v_{n-1, i} + \left(\nabla_i f_n(x_{n-1})\right)^2\\
    x_{n, i} &= x_{n-1, i} - \alpha_n \frac{m_{n, i}}{\sqrt{\epsilon + v_{n, i}}}.\label{eq:finalupdate}
\end{align}
The parameter
$\beta_1$ is a heavy-ball style momentum parameter~\citep{heavyball}, while $\beta_2$ controls the decay rate of the per-coordinate exponential moving average of the squared gradients.
Taking $\beta_1 =0$, $\beta_2=1$ and $\alpha_n = \alpha$ gives Adagrad. While the original Adagrad algorithm did not include a heavy-ball-like momentum, our analysis also applies to the case $\beta_1 > 0$.

\paragraph{Adam and its corrective terms}
The original Adam algorithm~\citep{adam} uses a weighed average, rather than a weighted sum for \eqref{eq:m_update} and \eqref{eq:v_update}, i.e. it uses
\begin{align*}
    \tilde{m}_{n, i}
                    &= (1-\beta_1) \sum_{k=1}^{n}\beta_1^{n - k} \nabla_i f_n(x_{k-1})  = (1 - \beta_1) m_{n,i},
\end{align*}
We can achieve the same definition by taking $\alpha_{\textrm{adam}} = \alpha\cdot \frac{1 - \beta_1}{\sqrt{1 - \beta_2}}$.
The original Adam algorithm further includes two corrective terms
to account for the fact that $m_n$ and $v_n$ are biased towards 0 for the first few iterations. Those corrective terms are equivalent to taking a step-size $\alpha_n$
of the form 
\begin{equation}
    \label{eq:adam_corrections}
    \alpha_{n,\text{adam}} = \alpha \cdot \frac{1 - \beta_1}{\sqrt{1 - \beta_2}} \cdot \underbrace{\frac{1}{1 - \beta_1^n}}_{\substack{\text{corrective}\\\text{term for $m_n$}}}\cdot \underbrace{\sqrt{1 - \beta_2^n}}_{\substack{\text{corrective}\\\text{term for $v_n$}}}.
\end{equation}
Those corrective terms can be seen as the normalization factors for the weighted
sum given by \eqref{eq:m_update} and \eqref{eq:v_update}
Note that each term goes to its limit value within
a few times $1 / (1 - \beta)$ updates (with $\beta\in\{\beta_1, \beta_2\}$). 
which explains the $(1 - \beta_1)$ term in \eqref{eq:adam_corrections}.
In the present work, we propose to drop the corrective term for $m_n$,
and to keep only the one for $v_n$, thus using
the alternative step size
\begin{equation}
    \label{eq:step_size_adam}
    \alpha_n = \alpha (1 - \beta_1)\sqrt{\frac{1-\beta_2^n}{1- \beta_2}}.
\end{equation}
This simplification motivated by several observations:
\vspace{-0.3cm}
\begin{itemize}
    \item By dropping either corrective terms, $\alpha_n$ becomes
        monotonic, which simplifies the proof.
    \vspace{-0.1cm}
    \item For typical values of $\beta_1$ and $\beta_2$ (e.g. 0.9 and 0.999), the corrective term
        for $m_n$ converges to its limit value much faster than the one for $v_n$.
    \vspace{-0.1cm}
    \item Removing the corrective term for $m_n$ is equivalent to a learning-rate warmup, which is popular in deep learning, while removing the one for $v_n$ would lead
    to an increased step size during early training. For values of $\beta_2$ close to 1,
    this can lead to divergence in practice.
\end{itemize}
\vspace{-0.2cm}
We experimentally verify in Section~\ref{sec:impact_corrective} that dropping the corrective
term for $m_n$ has no observable effect on the training process, while dropping
the corrective term for $v_n$ leads to observable perturbations.
In the following, we thus consider the variation of Adam obtained by taking $\alpha_n$ provided by \eqref{eq:step_size_adam}.

\subsection{Assumptions}
\label{sec:problem}

We make three assumptions.
 We first assume $F$ is bounded below by $F_*$, that is,
\begin{equation}
    \label{hyp:bounded_below}
    \forall x\in\reel^d, \  F(x) \geq F_*.
\end{equation}
We then assume \emph{the $\ell_\infty$ norm of the stochastic gradients is uniformly almost surely bounded}, i.e. there is $R \geq \sqrt{\eps}$ ($\sqrt{\eps}$ is used here to simplify
the final bounds) so that
\begin{equation}
    \label{hyp:r_bound}
     \forall x \in \reel^d,\quad \norm{\nabla f(x)}_\infty \leq R - \  \sqrt{\eps} \quad \text{a.s.},
\end{equation}
and finally, 
the \emph{smoothness of the objective function}, e.g., its gradient is $L$-Liptchitz-continuous with respect to the $\ell_2$-norm: 
\begin{equation}
    \label{hyp:smooth}
      \forall x, y\in \reel^d,\quad \norm{\nabla F(x) - \nabla F(y)}_2 \leq L \norm{x - y}_2.
\end{equation}

We discuss the use of assumption \eqref{hyp:r_bound} in Section~\ref{sec:analysis}.

\section{Related work}
\label{sec:relw}

Early work on adaptive methods~\citep{mcmahan2010adaptive,adagrad} showed that Adagrad achieves an optimal rate of convergence of $\bigo{1/\sqrt{N}}$ for convex optimization~\citep{optimal_convex}.
Later, RMSProp~\citep{rmsprop} and Adam~\citep{adam} were developed
for training deep neural networks, using
an exponential moving average of the past squared gradients.

\looseness=-1
\citet{adam} offered a proof that Adam with a decreasing step size converges for convex objectives. However, the proof contained a mistake spotted by \citet{adam_doesnt_converge}, who also gave examples of convex problems where Adam does not converge to an optimal solution. They proposed AMSGrad as a convergent variant, which consisted in retaining the maximum value of the exponential moving average.
When $\alpha$ goes to zero, AMSGrad is shown to converge in the convex and non-convex setting~\citep{fang2019convergence,zhou2018convergence}.
Despite this apparent flaw in the Adam algorithm,
it remains a widely popular optimizer, 
% be it for image generation~\citep{stylegan},
% music synthesis~\citep{jukebox}, or language modeling~\citep{bert}, 
raising the question as to whether it converges.
When $\beta_2$ goes to $1$ and $\alpha$ to 0, 
our results and previous work~\citep{zou2019sufficient} show that Adam does converge with the same rate as Adagrad.
This is coherent with the counter examples of \citet{adam_doesnt_converge}, because they uses a small exponential decay parameter $\beta_2<1/5$.

\looseness=-1
The convergence of Adagrad for non-convex objectives was first tackled by \citet{adagrad_non_convex_orabona}, 
who proved its convergence, but under restrictive conditions (e.g., $\alpha \leq \sqrt{\eps} / L$).
The proof technique was improved by \citet{ward_adagrad}, who showed the convergence of ``scalar'' Adagrad, i.e., with a single learning rate,
for any value of $\alpha$ with a rate of $\bigo{\ln(N)/\sqrt{N}}$. Our approach builds on this work but we extend it to both Adagrad and Adam, in their coordinate-wise version, as used in practice, while also supporting heavy-ball momentum.

The coordinate-wise version of Adagrad was also tackled by
 \citet{adagrad_proof_vector}, offering a convergence result for Adagrad with either heavy-ball or Nesterov style momentum. We obtain the same rate for heavy-ball momentum with respect to $N$ (i.e., $\bigo{\ln(N)/\sqrt{N}}$), but we improve the dependence on the momentum parameter $\beta_1$ from $\bigo{(1-\beta_1)^{-3}}$ to $\bigo{(1-\beta_1)^{-1}}$. 
\citet{adamlike} also provided a bound for Adagrad and Adam, but without convergence guarantees for Adam for any hyper-parameter choice, and with a
worse dependency on $\beta_1$.
 \citet{zhou2018convergence} also cover Adagrad in the stochastic setting, however their proof technique leads to a $\sqrt{1/\epsilon}$ term in their bound, typically with $\epsilon{=}10^{-8}$.
Finally, a convergence bound for Adam was introduced by \citet{zou2019sufficient}. We recover the same scaling of the bound with respect to $\alpha$ and $\beta_2$. However their bound has a dependency of $\bigo{(1-\beta_1)^{-5}}$ with respect to $\beta_1$, while we get $\bigo{(1-\beta_1)^{-1}}$, a significant improvement. \citet{shi2020rmsprop} obtain similar convergence results for RMSProp and Adam when considering the random shuffling setup. They use an affine growth condition (i.e. norm of the stochastic gradient is bounded by an affine function of the norm of the deterministic gradient) instead of the boundness of the gradient, but their bound decays with the number of total epochs, not stochastic updates leading to an overall $\sqrt{s}$ extra term with $s$ the size of the dataset. Finally, \citet{faw_adapt} use the same affine growth assumption to derive high probability bounds
for scalar Adagrad.

Non adaptive methods like SGD are also well studied in the non convex setting~\citep{random_sgd}, with
a convergence rate of $O(1/\sqrt{N})$ for a smooth objective with bounded variance of the gradients. Unlike adaptive methods, SGD requires knowing the smoothness constant.
When adding heavy-ball momentum, \citet{yang2016unified} showed that the convergence bound degrades as $O((1 - \beta_1)^{-2})$, assuming that the gradients are bounded. We apply
our proof technique for momentum to SGD in the Appendix, Section~\ref{app:sec:sgd} and improve
this dependency to $O((1 - \beta_1)^{-1})$. Recent work by \citet{liu_momentum} achieves the same dependency with weaker assumptions. \citet{defazio2020momentum} provided an in-depth analysis of SGD-M with a tight Liapunov analysis.
\section{Main results}
\label{sec:main_results}
For a number of iterations $N \in \nat^*$, we note $\tau_N$ a random index with value in $\{0, \ldots, N - 1\}$, so that
\begin{align}
\label{eq:def_tau}
    \forall j\in \nat, j < N, \proba{\tau = j} \propto 1 - \beta_1^{N - j}.
\end{align}
If $\beta_1 = 0$, this is equivalent to sampling $\tau$ uniformly in $\{0, \ldots, N {-} 1\}$. If $\beta_1 > 0$,
 the last few $\frac{1}{1 - \beta_1}$ iterations are sampled rarely, and iterations older than a few times that number are sampled almost uniformly.
Our results bound the expected squared norm of the gradient at iteration~$\tau$, which is standard for non convex stochastic optimization~\citep{random_sgd}.

\subsection{Convergence bounds}
\label{sec:theorems}

For simplicity, we first give convergence results for $\beta_1 = 0$, along with a complete proof in Section~\ref{sec:proofs}.
We then provide the results with momentum, 
with their proofs in the Appendix, Section~\ref{app:sec:proofs}.
We also provide a bound on the convergence of SGD with a $O(1 / (1 - \beta_1)$ dependency in the Appendix, Section~\ref{app:sec:sgd_result}, along with its proof in Section~\ref{app:sec:sgd_proof}.
\vspace{0.2em}

\paragraph{No heavy-ball momentum}
%  compared with previous work~\cite{yang2016unified}. This weaker assumption is sufficient
%  for SGD~\cite{random_sgd}, although \cite{yang2016unified} 
%  uses the stronger bounded gradients assumption.

%  Note that the disadvantageous dependency of the bound in $1 - \beta_1$ is not specific to our proof but can be observed in previous 
%  work on non convex adaptive methods~\cite{zou2019sufficient,adagrad_proof_vector} and SGD~\cite{yang2016unified}, although we significantly improved this dependency, as explained in Section~\ref{sec:relw}. 

\begin{theorem}[Convergence of Adagrad without momentum]
\label{theo:adagrad}
Given the assumptions from Section~\ref{sec:problem}, the iterates $x_n$ defined in Section~\ref{sec:adamethods}
with hyper-parameters verifying $\beta_2 = 1$, $\alpha_n =\alpha$ with $\alpha > 0$ and $\beta_1 = 0$, and $\tau$ defined by \eqref{eq:def_tau}, we have for any $N \in \nat^*$,
\begin{align}
    &\esp{\norm{\nabla F(x_\tau)}^2} \leq 
     2 R \frac{F(x_0) - F_*}{\alpha \sqrt{N}} + 
    \frac{1}{\sqrt{N}}\left(4 d R^2 + \alpha d R L \right) \ln\left(1 + \frac{N R^2}{\epsilon}\right).
    \label{eq:main_bound_adagrad}
\end{align}
\end{theorem}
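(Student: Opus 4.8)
The plan is to combine the $L$-smoothness descent inequality with a careful decorrelation of the adaptive denominator $\sqep{v_{n,i}}$ from the current stochastic gradient, in the spirit of \citet{ward_adagrad} but carried out coordinate-wise. Write $g_n \deq \nabla f_n(x_{n-1})$; with $\beta_1 = 0$, $\beta_2 = 1$ and $\alpha_n = \alpha$ the iteration reduces to $v_{n,i} = \sum_{k=1}^n g_{k,i}^2$ and $x_{n,i} = x_{n-1,i} - \alpha\, g_{n,i}/\sqep{v_{n,i}}$, and since $\beta_1 = 0$ makes $\tau$ uniform on $\{0,\dots,N-1\}$ (by \eqref{eq:def_tau}), the goal is to bound $\frac1N\sum_{n=0}^{N-1}\esp{\norm{\nabla F(x_n)}^2}$.

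\emph{Step 1 (per-step descent).} Applying \eqref{hyp:smooth} between $x_{n-1}$ and $x_n$ gives
\[
F(x_n) \le F(x_{n-1}) - \alpha\sum_{i\in[d]}\frac{\nabla_i F(x_{n-1})\,g_{n,i}}{\sqep{v_{n,i}}} + \frac{L\alpha^2}{2}\sum_{i\in[d]}\frac{g_{n,i}^2}{\epsilon+v_{n,i}}.
\]
The last term is benign; the difficulty is that $v_{n,i}$ depends on $g_n$, so $\espn{g_{n,i}/\sqep{v_{n,i}}}$ is \emph{not} $\nabla_i F(x_{n-1})/\sqep{v_{n,i}}$, and one cannot directly extract $\esp{\norm{\nabla F(x_{n-1})}^2}$.

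\emph{Step 2 (decorrelating the denominator).} Introduce the proxy $\tv_{n,i} \deq v_{n-1,i} + \espn{g_{n,i}^2}$, a function of $f_1,\dots,f_{n-1}$, and split
\[
\frac{\nabla_i F(x_{n-1})\,g_{n,i}}{\sqep{v_{n,i}}} = \frac{\nabla_i F(x_{n-1})\,g_{n,i}}{\sqep{\tv_{n,i}}} + \nabla_i F(x_{n-1})\,g_{n,i}\left(\frac{1}{\sqep{v_{n,i}}}-\frac{1}{\sqep{\tv_{n,i}}}\right).
\]
Taking $\espn{\cdot}$ turns the first piece into the ``good'' term $\alpha(\nabla_i F(x_{n-1}))^2/\sqep{\tv_{n,i}}$ (using $\espn{g_{n,i}} = \nabla_i F(x_{n-1})$ and the measurability of $\tv_{n,i}$). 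For the second piece I bound the reciprocal-square-root difference in absolute value by $\abs{v_{n,i}-\tv_{n,i}}/(\sqep{v_{n,i}}(\epsilon+\tv_{n,i}))$, note $\abs{v_{n,i}-\tv_{n,i}} = \abs{g_{n,i}^2 - \espn{g_{n,i}^2}} \le g_{n,i}^2 + \espn{g_{n,i}^2}$, and invoke the a.s. bound \eqref{hyp:r_bound} (which also gives $\norm{\nabla F}_\infty \le R$ by Jensen, hence $\abs{\nabla_i F(x_{n-1})} \le R$). A coordinate-wise Young inequality with a weight proportional to $\sqep{\tv_{n,i}}$ then absorbs half of this error into the good term, and — using $\sqep{v_{n,i}} \ge \abs{g_{n,i}}$ to cancel the surplus powers of $g_{n,i}$ — leaves a residual that reduces to sums of the type $\sum_n g_{n,i}^2/(\epsilon+v_{n,i})$. \emph{This is the crux of the proof}: the Young weight and the splitting of $\abs{v_{n,i}-\tv_{n,i}}$ must be arranged so that the residual is summable to a logarithm while costing only a constant factor — not an $\epsilon$- or $N$-dependent one — in front of the good term (this is also the step whose refinement yields, for general $\beta_1$, the paper's improved $\bigo{(1-\beta_1)^{-1}}$ dependence).

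\emph{Step 3 (summation and logarithmic bounds).} Take the full expectation, sum over $n = 1,\dots,N$, and telescope the left-hand side with \eqref{hyp:bounded_below}, so it is at least $F_* - F(x_0)$. Both the quadratic term of Step 1 and the residual of Step 2 feature $\sum_{n=1}^N g_{n,i}^2/(\epsilon+\sum_{k\le n}g_{k,i}^2)$, which I bound deterministically by $\ln(1+\sum_{n=1}^N g_{n,i}^2/\epsilon) \le \ln(1+NR^2/\epsilon)$ since $g_{n,i}^2 \le R^2$, hence by $d\ln(1+NR^2/\epsilon)$ after summing over $i$. For the good term, the assumption $R \ge \sqrt{\eps}$ yields $\epsilon + \tv_{n,i} \le nR^2 \le NR^2$, so $\sqep{\tv_{n,i}} \le R\sqrt N$ and, after the Step 2 absorption, $\sum_i\alpha(\nabla_i F(x_{n-1}))^2/(2\sqep{\tv_{n,i}}) \ge \alpha\esp{\norm{\nabla F(x_{n-1})}^2}/(2R\sqrt N)$ per step. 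Collecting the three contributions, dividing by $N$, and identifying $\frac1N\sum_{n=0}^{N-1}\esp{\norm{\nabla F(x_n)}^2}$ with $\esp{\norm{\nabla F(x_\tau)}^2}$ gives \eqref{eq:main_bound_adagrad}: the $2R(F(x_0)-F_*)/(\alpha\sqrt N)$ term from the telescoped descent, and the $\frac1{\sqrt N}(4dR^2+\alpha dRL)\ln(1+NR^2/\epsilon)$ term from the residual error ($\propto\alpha dR$) and the quadratic term ($\propto\alpha^2 dL$) once multiplied through by $2R/(\alpha\sqrt N)$.
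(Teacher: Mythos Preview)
Your proposal is correct and follows essentially the same route as the paper: the decorrelation via the proxy $\tv_{n,i}$ and the Young-inequality splitting in your Step~2 are exactly the content of Lemma~\ref{lemma:descent_lemma}, and the logarithmic bound on $\sum_n g_{n,i}^2/(\epsilon+v_{n,i})$ in Step~3 is Lemma~\ref{lemma:sum_ratio} specialized to $\beta_2=1$. The only cosmetic difference is that the paper bounds $\sqrt{\epsilon+\tv_{n,i}}\le R\sqrt{n}$ first and then uses $\sqrt{n}\le\sqrt{N}$, whereas you go directly to $R\sqrt{N}$; the resulting constants are identical.
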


\begin{theorem}[Convergence of Adam without momentum]
\label{theo:adam}
Given the assumptions from Section~\ref{sec:problem}, the iterates $x_n$ defined in Section~\ref{sec:adamethods}
with hyper-parameters verifying $0 < \beta_2 < 1$, $\alpha_n = \alpha \sqrt{\frac{1 - \beta_2^n}{1-\beta_2}}$ with $\alpha > 0$ and $\beta_1 = 0$, and $\tau$ defined by \eqref{eq:def_tau}, we have for any $N \in \nat^*$,
\begin{align}
    &\esp{\norm{\nabla F(x_\tau)}^2} \leq 2 R \frac{F(x_0) - F_*}{\alpha N}
    + E \left(\frac{1}{N}\ln\left(1 + \frac{R^2}{(1 - \beta_2)\epsilon}\right) -
    \ln(\beta_2)\right),
    \label{eq:main_bound_adam}
\end{align}
with
    \begin{equation*}E = \frac{4 d R^2}{\sqrt{1 - \beta_2}} + \frac{\alpha d R L}{1 - \beta_2}.\end{equation*}
\end{theorem}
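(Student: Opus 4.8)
The plan is to run the classical $L$-smoothness ``descent lemma'', take conditional expectations, correct for the bias caused by the adaptive denominator $\sqep{v_{n,i}}$ depending on the current stochastic gradient, and close the argument with a telescoping estimate. Since $\beta_1=0$ we have $m_n=\nabla f_n(x_{n-1})$, so the update reads $x_{n,i}=x_{n-1,i}-\alpha_n \nabla_i f_n(x_{n-1})/\sqep{v_{n,i}}$. Applying \eqref{hyp:smooth} to $x_{n-1}$ and $x_n$ gives, coordinate by coordinate,
\[
F(x_n)\le F(x_{n-1}) - \alpha_n\sum_i \nabla_i F(x_{n-1})\,\frac{\nabla_i f_n(x_{n-1})}{\sqep{v_{n,i}}} + \frac{L\alpha_n^2}{2}\sum_i \frac{(\nabla_i f_n(x_{n-1}))^2}{\epsilon+v_{n,i}}.
\]

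Next I would take $\espn{\cdot}$. The cross term is biased, so I introduce an $\mathcal F_{n-1}$-measurable surrogate $\tilde v_{n,i}$ for $v_{n,i}$ (for instance $\tilde v_{n,i}=\beta_2 v_{n-1,i}+\espn{(\nabla_i f_n(x_{n-1}))^2}$) and split $\nabla_i f_n(x_{n-1})/\sqep{v_{n,i}} = \nabla_i f_n(x_{n-1})/\sqep{\tilde v_{n,i}} + \nabla_i f_n(x_{n-1})\,\delta_{n,i}$ with $\delta_{n,i}=1/\sqep{v_{n,i}}-1/\sqep{\tilde v_{n,i}}$. Conditioning on $\mathcal F_{n-1}$, the first piece yields the wanted ``good'' term $-\alpha_n (\nabla_i F(x_{n-1}))^2/\sqep{\tilde v_{n,i}}$, and the bias $-\alpha_n\nabla_i F(x_{n-1})\espn{\nabla_i f_n(x_{n-1})\delta_{n,i}}$ must be controlled. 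For that I use that $|\delta_{n,i}|$ equals $|v_{n,i}-\tilde v_{n,i}|$ divided by $\sqep{v_{n,i}}\sqep{\tilde v_{n,i}}(\sqep{v_{n,i}}+\sqep{\tilde v_{n,i}})$, that $|v_{n,i}-\tilde v_{n,i}|=|(\nabla_i f_n(x_{n-1}))^2-\espn{(\nabla_i f_n(x_{n-1}))^2}|$, and that $|\nabla_i f_n(x_{n-1})|\le\sqep{v_{n,i}}$; keeping the factors $\epsilon+v_{n,i}$ in the denominator (never replacing them by $\epsilon$) and applying Young's inequality with weights chosen so that the reinjected $(\nabla_i F(x_{n-1}))^2$-term is at most half of the good term, the bias reduces to a multiple of $\alpha_n\,\espn{(\nabla_i f_n(x_{n-1}))^2/(\epsilon+v_{n,i})}$, i.e. exactly the quantity that telescopes.

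It then remains to plug in the uniform bounds $\epsilon+v_{n,i}\le R^2(1-\beta_2^n)/(1-\beta_2)$ and $\epsilon+\tilde v_{n,i}\le R^2(1-\beta_2^n)/(1-\beta_2)$ (from \eqref{hyp:r_bound} and $R\ge\sqrt\eps$), which with $\alpha_n=\alpha\sqrt{(1-\beta_2^n)/(1-\beta_2)}$ give $\alpha_n/\sqep{\tilde v_{n,i}}\ge\alpha/R$, so the halved good term is at least $\tfrac{\alpha}{2R}\norm{\nabla F(x_{n-1})}^2$, while $\alpha_n\le\alpha/\sqrt{1-\beta_2}$ controls the remaining coefficients. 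Summing over $n=1,\dots,N$, taking total expectation, telescoping the $F$-terms with \eqref{hyp:bounded_below}, and using the elementary lemma (from $\ln x\ge 1-1/x$ and $\epsilon/\beta_2\ge\epsilon$) that $\sum_{n=1}^N (\nabla_i f_n(x_{n-1}))^2/(\epsilon+v_{n,i})\le \ln(1+v_{N,i}/\epsilon)-N\ln\beta_2\le \ln\!\big(1+\tfrac{R^2}{(1-\beta_2)\epsilon}\big)-N\ln\beta_2$, then dividing by $N$ and using that $\tau$ is uniform on $\{0,\dots,N-1\}$ when $\beta_1=0$ (so $\esp{\norm{\nabla F(x_\tau)}^2}=\tfrac1N\sum_{n=1}^N\esp{\norm{\nabla F(x_{n-1})}^2}$), gives \eqref{eq:main_bound_adam}: the $\alpha dRL/(1-\beta_2)$ part of $C$ comes from the smoothness term and the $dR^2$-type part from the bias residual.

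The main obstacle is the bias term. A crude bound on $\delta_{n,i}$ — e.g. replacing $|v_{n,i}-\tilde v_{n,i}|$ by $R^2$ and dividing by $\sqep{\tilde v_{n,i}}\ge\sqrt\epsilon$ — produces error contributions scaling like $1/\sqrt\epsilon$, or constants that do not decay with $N$; the trick is to cancel the one stray $\sqep{v_{n,i}}$ against $|\nabla_i f_n(x_{n-1})|$ and arrange the remainder to be exactly of telescoping type. Getting the sharp constant (a $\sqrt{1-\beta_2}$ rather than a $1-\beta_2$ in $C$) additionally requires, when bounding $\sum_n\alpha_n(\nabla_i f_n(x_{n-1}))^2/\sqep{v_{n,i}}$, a telescoping estimate that exploits the monotone growth of $\alpha_n$ instead of replacing $\alpha_n$ by its maximum.
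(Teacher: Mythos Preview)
Your plan is correct and matches the paper's proof essentially step for step: the paper packages your bias analysis as Lemma~5.1 (splitting $|v-\tilde v|\le g^2+\espn{g^2}$ into two pieces and applying \eqref{eq:square_bound} to each, with weights tuned so the reinjected $G^2$ totals exactly half the good term and the residual is $2R\,\espn{g^2/(\eps+v)}$), and your telescoping log-sum is Lemma~5.2. The lower bound $\alpha_n/\sqep{\tilde v_{n,i}}\ge\alpha/R$, the replacement $\alpha_n\le\alpha/\sqrt{1-\beta_2}$, and the use of uniform $\tau$ are all exactly as in the paper.

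One small correction to your final paragraph: no monotone-$\alpha_n$ telescoping is needed for the $4dR^2/\sqrt{1-\beta_2}$ constant. The paper simply bounds $\alpha_n\le\alpha/\sqrt{1-\beta_2}$ in the bias term $2\alpha_n R\,\espn{\norm{u_n}_2^2}$ and then applies Lemma~5.2 to $\sum_n\norm{u_n}_2^2$; after dividing through by $\alpha/(2R)$ this already gives $4dR^2/\sqrt{1-\beta_2}$. The quantity $\sum_n\alpha_n g_{n,i}^2/\sqep{v_{n,i}}$ that you mention does not arise---the residuals all carry $g^2/(\eps+v)$, not $g^2/\sqep{v}$.
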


\paragraph{With heavy-ball momentum}
\begin{restatable}[Convergence of Adagrad with momentum]{theorem}{firstada}
\label{theo:adagrad_momentum}
Given the assumptions from Section~\ref{sec:problem}, the iterates $x_n$ defined in Section~\ref{sec:adamethods}
with hyper-parameters verifying $\beta_2 = 1$, $\alpha_n =\alpha$ with $\alpha > 0$ and $0 \leq \beta_1 < 1$, 
and $\tau$ defined by \eqref{eq:def_tau}, we have for any $N \in \nat^*$ such that $N > \frac{\beta_1}{1 - \beta_1}$,
\begin{align}
    &\esp{\norm{\nabla F(x_\tau)}^2} \leq 2 R \sqrt{N} \frac{F(x_0) - F_*}{\alpha\tilde{N}}
 + \frac{\sqrt{N}}{\tilde{N}} E \ln\left(1 + \frac{N R^2}{\epsilon}\right),
    \label{eq:main_bound_adagrad_momentum}
\end{align}
with
$
    \tilde{N} = N - \frac{\beta_1}{1 - \beta_1}$, and, $$
    E = \alpha d R L + 
     \frac{12 d R^2}{1 - \beta_1} +
     \frac{2 \alpha^2 d L^2 \beta_1}{1 - \beta_1}.$$
\end{restatable}

\begin{restatable}[Convergence of Adam with momentum]{theorem}{firstadam}
\label{theo:adam_momentum}
Given the assumptions from Section~\ref{sec:problem}, the iterates $x_n$ defined in Section~\ref{sec:adamethods}
with hyper-parameters verifying $0 < \beta_2 < 1$, $0 \leq  \beta_1 < \beta_2$, and, ${\alpha_n=\alpha (1 - \beta_1) \sqrt{\frac{1 - \beta_2^n}{1-\beta_2}}}$ with $\alpha > 0$, and $\tau$ defined by \eqref{eq:def_tau}, we have for any $N \in \nat^*$ such that $N > \frac{\beta_1}{1 - \beta_1}$,
\begin{align}
    &\esp{\norm{\nabla F(x_\tau)}^2} \leq 2 R \frac{F(x_0) - F_*}{\alpha \tilde{N}} 
     + E \left(\frac{1}{\tilde{N}}\ln\left(1 + \frac{R^2}{(1 - \beta_2) \eps}\right) -
    \frac{N}{\tilde{N}}\ln(\beta_2)\right)
    ,
    \label{eq:main_bound_adam_momentum}
\end{align}
with
    $\tilde{N} = N - \frac{\beta_1}{1 - \beta_1}$,
and
\begin{align*}
     E&=\frac{\alpha d R L ( 1- \beta_1)}{(1 - \beta_1 / \beta_2)(1 - \beta_2)}
    +
     \frac{12 d R^2 \sqrt{1-\beta_1}}{(1 - \beta_1 / \beta_2)^{3/2}\sqrt{1 - \beta_2}}
     +
     \frac{2 \alpha^2 d L^2 \beta_1}{(1 - \beta_1 / \beta_2) (1 - \beta_2)^{3/2}}.
\end{align*}
\end{restatable}

\subsection{Analysis of the bounds}
\label{sec:analysis}

\paragraph{Dependency on $d$.}
The dependency in $d$ is present in previous works on coordinate wise adaptive methods~\citep{zou2019sufficient,adagrad_proof_vector}. Note however that $R$ is defined
as the $\ell_\infty$ bound on the on the stochastic gradient, so that in the case where the gradient has a similar scale along all dimensions, $d R^2$ would be a reasonable bound for 
$\norm{\nabla f(x)}_2^2$. However, if many dimensions contribute little to the norm of the gradient, this would still lead to a worse
dependency in $d$ that e.g. scalar Adagrad~\citet{ward_adagrad} or SGD.

Diving into the technicalities of the proof to come, we will see in Section~\ref{sec:proofs} that we apply Lemma~\ref{lemma:sum_ratio}
once per dimension. The contribution from each coordinate is mostly independent of the actual scale of its gradients (as it only appears in the log), so that the right hand side of the convergence bound will grow as $d$.
In contrast, the scalar version of Adagrad~\citep{ward_adagrad} has a single learning rate, so that Lemma~\ref{lemma:sum_ratio} is only applied once, removing the dependency on $d$. However, this variant is rarely used in practice.

% Looking at the bounds introduced in the previous section, one can notice the presence of two terms: the forgetting of the initial condition, proportional to $F(x_0) - F_*$, and a second term that scales as $d$. The scaling as $d$ is inevitable given our hypothesis, in particular the use of a bound on the $\ell_\infty$-norm of the gradients. Indeed, for any bound valid for a function $F_1$ with $d = 1$, then we can build a new function $F_d = \sum_{i\in [d]} F_1(x_i)$, i.e., we replicate $d$ times the same optimization problem. The Hessian of $F_d$ is diagonal with each diagonal element being the same as the Hessian of $F_1$, thus the smoothness constant is unchanged, nor is the $\ell_\infty$ bound on the stochastic gradients.
% Each dimension is independent from the other and equivalent to the single dimension problem given by $F_1$, thus $\esp{\norm{\nabla F_d(x_\tau)}_2^2}$ scales as $d$.

\paragraph{Almost sure bound on the gradient.}
We chose to assume the existence of an almost sure uniform $\ell_\infty$-bound on the gradients given by~\eqref{hyp:r_bound}. This is a strong assumption, although 
it is weaker than the one used by \citet{adagrad} for Adagrad in the convex case,
where the iterates were assumed to be almost surely bounded. There exist a few real life problems that verifies this assumption, for instance logistic regression without weight penalty, and with bounded inputs.
It is possible instead to assume only a uniform bound on the expected gradient $\nabla F(x)$, as done by \citet{ward_adagrad} and \citet{adagrad_proof_vector}.
This however lead to a bound on $\esp{\norm{\nabla F(x_\tau)}_2^{4/3}}^{2/3}$
instead of a bound on $\esp{\norm{\nabla F(x_\tau)}_2^2}$, all the other terms staying the same. We provide the sketch of the proof using Hölder inequality in the Appendix, Section~\ref{app:sec:holder}.

It is also possible to replace the bound on the gradient with an affine growth condition, i.e. the norm of the stochastic gradient
is bounded by an affine function of the norm of the expected gradient. A proof for scalar Adagrad is provided by~\citet{faw_adapt}.
\citet{shi2020rmsprop} do the same for RMSProp, however their convergence bound is decays as $O(\log(T) /\sqrt{T})$ with $T$ the number of epoch, not the number of updates, leading to a significantly less tight bound for large datasets.

% We use \eqref{hyp:r_bound}
% in Lemma~\ref{lemma:descent_lemma}, to obtain \eqref{eq:bound_kappa_simple} and \eqref{eq:bound_rho_simple}, however in that case, a bound on the expected squared norm of the gradients is sufficient.
% We then use \eqref{hyp:r_bound} to derive \eqref{eq:adagrad_use_r_bound} and \eqref{eq:adam_use_r_bound} in Section~\ref{sec:proof_main}. For those, one can assume only a bound in expectation and use Hölder inequality, as done by 
% \citet{ward_adagrad} and 
% \citet{adagrad_proof_vector}. This however deteriorates the bound, as instead of a bound on $\esp{\norm{\nabla F(x_\tau)}_2^2}$, one would obtain a bound on $\esp{\norm{\nabla F(x_\tau)}_2^{4/3}}^{2/3}$.

\paragraph{Impact of heavy-ball momentum.}

Looking at Theorems~\ref{theo:adagrad_momentum} and \ref{theo:adam_momentum}, we see that increasing $\beta_1$ always deteriorates
the bounds. 
Taking $\beta_1 = 0$ in those theorems gives us almost exactly the bound without heavy-ball momentum  from Theorems~\ref{theo:adagrad} and \ref{theo:adam}, up to a factor 3 in the terms of the form $d R^2$.

% \looseness=-1
As discussed in Section~\ref{sec:relw}, 
previous bounds for Adagrad in the non-convex
setting deteriorates as $O((1 - \beta_1)^{-3})$ \citep{adagrad_proof_vector},
while bounds for Adam deteriorates
as $O((1 - \beta_1)^{-5})$ \citep{zou2019sufficient}. Our unified proof for Adam and Adagrad achieves a dependency of $O((1 - \beta_1)^{-1})$, a significant improvement. We refer the reader to the Appendix,  Section~\ref{app:sec:analysis}, for a detailed analysis.
While our dependency still contradicts the benefits of using momentum observed in practice, see Section~\ref{sec:experiments}, our tighter analysis is a step in the right direction.

\paragraph{On sampling of $\tau$}

Note that in \eqref{eq:def_tau}, we sample with a lower probability the latest iterations. This can be explained by the fact that the proof technique for stochastic optimization in the non-convex case is based on the idea that for every iteration $n$,
either $\nabla F(x_n)$ is small, or $F(x_{n+1})$ will decrease by some amount.
However, when introducing momentum, and especially when taking the limit $\beta_1 \rightarrow 1$, the latest gradient $\nabla F(x_n)$ has almost no influence over $x_{n+1}$,
as the momentum term updates slowly. Momentum \emph{spreads} the influence
of the gradients over time, and thus, it will take a few updates for a gradient
to have fully influenced the iterate $x_n$ and thus the value of the function $F(x_n)$.
From a formal point of view, the sampling weights given by \eqref{eq:def_tau}
naturally appear as part of the proof which is presented in Section~\ref{app:sec:proofs}.

\subsection{Optimal finite horizon Adam is Adagrad}
\label{sec:finite_adam}

Let us take a closer look at the result from Theorem~\ref{theo:adam}. It could seem like some quantities can explode but actually not for any
reasonable values of $\alpha$, $\beta_2$ and $N$. 
Let us try to find the best possible rate of convergence for Adam for a finite horizon $N$, i.e. $q\in \reel_+$ such that
$\esp{\norm{\nabla F(x_\tau)}^2} = O(\ln(N) N^{-q})$ for some choice of the  hyper-parameters $\alpha(N)$ and $\beta_2(N)$. Given that the upper bound in \eqref{eq:main_bound_adam} is a sum of non-negative terms, we need each term
to be of the order of $\ln(N) N^{-q}$ or negligible.
Let us assume that this rate is achieved for $\alpha(N)$ and $\beta_2(N)$. The bound tells us that convergence can only be achieved if $\lim\alpha(N) = 0$ and $\lim\beta_2(N) = 1$, with the limits taken for $N \rightarrow \infty$. This motivates us to assume that there exists an asymptotic development of $\alpha(N) \propto N^{-a} + o(N^{-a})$,
and of $1 - \beta_2(N) \propto N^{-b} + o(N^{-b})$ for $a$ and $b$ positive.
Thus, let us consider only the leading term in those developments, ignoring the leading constant (which is assumed to be non-zero). Let us further assume that $\eps \ll R^2$, we have
\begin{align}
    &\esp{\norm{\nabla F(x_\tau)}^2} \leq 2 R \frac{F(x_0) - F_*}{N^{1 - a}} 
    + E \left(\frac{1}{N}\ln\left(\frac{R^2 N^b}{\epsilon}\right) + \frac{N^{-b}}{1 - N^{-b}}\right),
\end{align}
with $E = 4 d R^2 N^{b/2} +d R L N^{b- a}$.
Let us ignore the log terms for now, and use $\frac{N^{-b}}{1 - N^{-b}}\sim N^{-b}$ for $N\rightarrow \infty$ , to get
\begin{align*}
    &\esp{\norm{\nabla F(x_\tau)}^2} \lessapprox 2 R \frac{F(x_0) - F_*}{N^{1 - a}} 
    + 
    4 d R^2 N^{b/2 - 1}
    +4 d R^2 N^{-b/2} + d R L N^{b-a-1} + d R LN^{-a}.
\end{align*}
Adding back the logarithmic term, the best rate we can obtain is $\bigo{\ln(N)/ \sqrt{N}}$, and it is only achieved for $a = 1/2$ and $b = 1$, i.e., $\alpha =\alpha_1 / \sqrt{N}$ and 
$\beta_2 = 1 - 1 / N$. We can see the resemblance between Adagrad on one side and Adam with a finite horizon and such parameters on the other.
Indeed, an exponential moving average with a parameter $\beta_2 = 1 - 1 / N$ as a typical averaging window length of size $N$, while Adagrad would be an exact average of the past $N$ terms. In particular, the bound for Adam now becomes
\begin{align}
    &\esp{\norm{\nabla F(x_\tau)}^2} \leq \frac{F(x_0) - F_*}{\alpha_1 \sqrt{N}} 
    + \frac{1}{\sqrt{N}}\left(4 d R^2 + \alpha_1 dR L\right) \left(\ln\left(1 + \frac{R N }{\epsilon}\right) + \frac{N}{N - 1}\right),
\end{align}
which differ from~\eqref{eq:main_bound_adagrad} only by a $+N/(N-1)$ next to the log term.

\paragraph{Adam and Adagrad are twins.}
Our analysis highlights an important fact: \emph{Adam is to Adagrad like constant step size SGD is to decaying step size SGD}. While Adagrad is asymptotically optimal, it also leads to a slower decrease of the term proportional to $F(x_0) - F_*$, as $1 / \sqrt{N}$ instead of $1 / N$ for Adam. During the initial phase of training, it is likely that this term dominates the loss, which could explain the popularity of Adam for training deep neural networks rather than Adagrad. With its default parameters, Adam will not converge. It is however possible to choose $\alpha$ and $\beta_2$ 
to achieve an $\epsilon$ critical point for $\epsilon$ arbitrarily small and, for a known time horizon, they can be chosen to obtain the exact same bound as Adagrad.

\section{Proofs for $\beta_1=0$ (no momentum)}
\label{sec:proofs}

We assume here for simplicity that $\beta_1 = 0$, i.e., there is no heavy-ball style momentum.
Taking $n \in \nat^{*}$, the  recursions introduced in Section~\ref{sec:adamethods} can be simplified into
\begin{align}
\begin{cases}
    v_{n, i} &= \beta_2 v_{n-1, i} + \left(\nabla_i f_n(x_{n-1})\right)^2, \\
    x_{n, i} &= x_{n-1, i} - \alpha_n \frac{\nabla_i f_n(x_{n-1})}{\sqrt{\epsilon + v_{n, i}}}.
\end{cases}
    \label{eq:update_x}
\end{align}
Remember that we recover Adagrad when $\alpha_n = \alpha$ for $\alpha > 0$ and $\beta_2 = 1$,
while Adam can be obtained taking $0 < \beta_2 < 1$, $\alpha > 0$,
\begin{equation}\label{eq:step_size_adam_simple}
    \alpha_n = \alpha \sqrt{\frac{1-\beta_2^n}{1- \beta_2}},
\end{equation}

Throughout the proof we denote by $\espn{\cdot}$ the conditional expectation with respect to $f_1, \ldots, f_{n-1}$. In particular, $x_{n-1}$ and $v_{n-1}$ are deterministic knowing $f_1, \ldots, f_{n-1}$.
For all $n \in \nat^*$, we also define $\tv_n \in\reel^d$ so that for all $i \in [d]$, 
\begin{equation}
\label{eq:def_tv}
\tv_{n, i} = \beta_2 v_{n-1,i} + \espn{\sqdif{_if_n(x_{n-1})}},
\end{equation}
i.e., we replace the last gradient contribution by its expected value conditioned on $f_1, \ldots, f_{n-1}$.

\subsection{Technical lemmas}
\label{sec:lemmas}

\looseness=-1
A problem posed by the update \eqref{eq:update_x} is the correlation between the numerator and
denominator. This prevents us from easily computing the conditional expectation and as noted by 
\citet{adam_doesnt_converge}, the expected direction of update can have a positive dot product with
the objective gradient.
It is however possible to control the deviation from the descent direction, following~\cite{ward_adagrad} with this first lemma.
\begin{lemma}[adaptive update approximately follow a descent direction]
\label{lemma:descent_lemma}
For all $n \in \nat^*$ and $i \in [d]$, we have:
\begin{align}
   &\espn{\nabla_i F(x_{n-1}) \frac{\nabla_i f_n(x_{n-1})}{\sqep{v_{n, i}}}} \geq
   \frac{\sqdif{_iF(x_{n-1})}}{2\sqep{\tv_{n, i}}} 
   - 2R
   \espn{\frac{\sqdif{_if_n(x_{n-1})}}{\eps + v_{n, i}}}.
   \label{eq:descent_lemma}
\end{align}
\end{lemma}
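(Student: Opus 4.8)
The plan is to control the discrepancy between the ``true'' denominator $\sqep{v_{n,i}}$, which is correlated with the numerator $\nabla_i f_n(x_{n-1})$, and the surrogate denominator $\sqep{\tv_{n,i}}$, which is deterministic given $f_1,\dots,f_{n-1}$ and hence pulls out of $\espn{\cdot}$ cleanly. Write $g = \nabla_i f_n(x_{n-1})$ and $G = \nabla_i F(x_{n-1})$ for brevity, so $\espn{g} = G$. The starting point is the algebraic identity
\begin{equation*}
\espn{\frac{G\, g}{\sqep{v_{n,i}}}} = \espn{\frac{G\, g}{\sqep{\tv_{n,i}}}} + \espn{G\, g\left(\frac{1}{\sqep{v_{n,i}}} - \frac{1}{\sqep{\tv_{n,i}}}\right)}.
\end{equation*}
The first term equals $\dfrac{G\,\espn{g}}{\sqep{\tv_{n,i}}} = \dfrac{G^2}{\sqep{\tv_{n,i}}}$ exactly, since $\tv_{n,i}$ is deterministic under $\espn{\cdot}$; this already exceeds the $\tfrac12 \sqdif{_iF(x_{n-1})}/\sqep{\tv_{n,i}}$ appearing in the claim, so the remaining $\tfrac12 G^2/\sqep{\tv_{n,i}}$ must be used to absorb part of the error term.

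Next I would bound the error term. Using $\left|\frac{1}{\sqrt a} - \frac{1}{\sqrt b}\right| = \frac{|a-b|}{\sqrt a \sqrt b (\sqrt a + \sqrt b)}$ with $a = \eps + v_{n,i}$, $b = \eps + \tv_{n,i}$, and noting $v_{n,i} - \tv_{n,i} = g^2 - \espn{g^2}$, we get
\begin{equation*}
\left|\frac{1}{\sqep{v_{n,i}}} - \frac{1}{\sqep{\tv_{n,i}}}\right| \le \frac{|g^2 - \espn{g^2}|}{\sqep{v_{n,i}}\,\sqep{\tv_{n,i}}\,\bigl(\sqep{v_{n,i}} + \sqep{\tv_{n,i}}\bigr)}.
\end{equation*}
To make this tractable I would split $|g^2 - \espn{g^2}| \le g^2 + \espn{g^2}$ and handle the two pieces separately: in the $g^2$ piece, cancel one factor of $\sqep{v_{n,i}}$ from the denominator (since $v_{n,i} \ge g^2$, hence $\sqep{v_{n,i}} \ge |g|$), leaving something like $\frac{|G|\,g^2}{\sqep{v_{n,i}}\,\sqep{\tv_{n,i}}}$; in the $\espn{g^2}$ piece, use that $\tv_{n,i} \ge \espn{g^2}$ similarly. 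Then apply $|G| \le R$ from \eqref{hyp:r_bound} (the deterministic bound transfers to $F$ since $|G| = |\espn{g}| \le \espn{|g|} \le R - \sqrt\eps \le R$), and use $\sqep{\tv_{n,i}} \ge \sqrt\eps$ where needed, plus the elementary inequality $2xy \le c x^2 + y^2/c$ to split the cross term $\frac{|G|\,g^2}{\sqep{v_{n,i}}\sqep{\tv_{n,i}}}$ into a piece bounded by $\tfrac12 \frac{G^2}{\sqep{\tv_{n,i}}}$ (absorbing the leftover half from the main term) and a piece bounded by a multiple of $\espn{g^2/(\eps + v_{n,i})}$.

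The main obstacle is the bookkeeping in the last step: juggling the three denominator factors so that after pulling out $|G| \le R$ one is left with exactly $\espn{\sqdif{_if_n(x_{n-1})}/(\eps+v_{n,i})}$ up to the constant $2R$, while simultaneously routing half of the $G^2/\sqep{\tv_{n,i}}$ term to cover the deficit created by the cross-term split. The $\espn{g^2}$ piece is slightly subtle because $\espn{g^2}$ appears both inside the expectation (from the numerator split) and implicitly through $\tv_{n,i}$; I expect one needs Jensen, i.e. $\frac{\espn{g^2}}{\eps + \tv_{n,i}} \le \espn{\frac{g^2}{\eps + v_{n,i}}}$, which follows from convexity of $t \mapsto t/(\eps + \beta_2 v_{n-1,i} + t)$... actually more carefully from the fact that $\tv_{n,i} = \espn{v_{n,i}}$ and $x \mapsto x/(\eps+x)$ concave combined with $v_{n,i} \ge \beta_2 v_{n-1,i}$. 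Getting the constant to come out as exactly $2R$ rather than some larger absolute constant is the delicate part; everything else is routine algebra.
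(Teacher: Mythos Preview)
Your overall architecture matches the paper's proof exactly: decompose into the $\tv$-term (which gives $G^2/\sqep{\tv}$) plus a correction $A$, rewrite $A$ via the difference-of-square-roots identity, split $|g^2-\espn{g^2}|\le g^2+\espn{g^2}$ into two pieces $\rho$ and $\kappa$, and handle each with Young's inequality so that each contributes $\tfrac14 G^2/\sqep{\tv}$ plus something bounded by $R\,\espn{g^2/(\eps+v)}$.

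Two of your proposed bookkeeping steps would not work, though. First, the Jensen-type inequality $\frac{\espn{g^2}}{\eps+\tv}\le \espn{\frac{g^2}{\eps+v}}$ is \emph{false}: writing $c=\eps+\beta_2 v_{n-1}$, this is $\phi(\espn{g^2})\le\espn{\phi(g^2)}$ for $\phi(t)=t/(c+t)$, but $\phi$ is concave, so Jensen goes the other way. The paper never needs this inequality. Second, cancelling $|g|$ against a factor $\sqep{v}$ \emph{before} applying Young's is too lossy: from $\frac{|G|g^2}{\sqep{v}\sqep{\tv}}$ you cannot recover a bound of the form $R\,\frac{g^2}{\eps+v}$, because after Young's the residual is $\frac{g^4}{(\eps+v)\sqep{\tv}}\le\frac{g^2}{\sqep{\tv}}$, which is not controlled by $\frac{g^2}{\eps+v}$ (and the latter form is essential for the telescoping in Lemma~\ref{lemma:sum_ratio}). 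The paper instead keeps $\rho=\frac{|Gg|g^2}{(\eps+v)\sqep{\tv}}$ intact and applies Young's with the \emph{data-dependent} parameter $\lambda=\frac{\sqep{\tv}}{2\espn{g^2}}$, which produces $\frac{G^2}{4\sqep{\tv}}\cdot\frac{g^2}{\espn{g^2}}+\frac{\espn{g^2}}{\sqep{\tv}}\cdot\frac{g^4}{(\eps+v)^2}$; only \emph{after} this does one use $g^2\le\eps+v$ and take conditional expectation. The analogous trick works for $\kappa$. Finally, the factor $R$ arises not from $|G|\le R$ but from $\frac{\espn{g^2}}{\sqep{\tv}}\le\sqrt{\espn{g^2}}\le R$, using $\tv\ge\espn{g^2}$.
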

\begin{proof}
We take $i \in [d]$ and note $G = \nabla_i F(x_{n-1})$, $g = \nabla_i f_n(x_{n-1})$, $v = v_{n, i}$ and
$\tv = \tv_{n, i}$.
\begin{align}
   &\espn{\frac{G g}{\sqep{v}}} =
   \espn{\frac{G g}{\sqep{\tv}}} +
   \mathbb{E}_{n-1}\bigg[\underbrace{G g\left(\frac{1}{\sqep{v}} -
   \frac{1}{\sqep{\tv}}\right)}_{A}\bigg].
   \label{eq:descent_proof}
\end{align}
Given that $g$ and $\tv$ are independent knowing $f_1, \ldots, f_{n-1}$, we immediately have 
\begin{equation}
\label{eq:uncorr_esp_is_g2}
\espn{\frac{G g}{\sqep{\tv}}} = \frac{G^2}{\sqep{\tv}}.
\end{equation}
Now we need to control the size of the second term $A$,
\begin{align*}
    A &= G g \frac{\tv - v}{\sqep{v} \sqep{\tv} (\sqep{v} + \sqep{\tv})}
    \\
    &= G g\frac{\espn{g^2} - g^2}{\sqep{v} \sqep{\tv} (\sqep{v} + \sqep{\tv})}\\
    \abs{A} &\leq \underbrace{\abs{G g}\frac{\espn{g^2}}{\sqep{v} (\eps + \tv)}}_\kappa +
    \underbrace{\abs{G g}\frac{g^2}{(\eps + v)\sqep{\tv}}}_\rho.
\end{align*}
The last inequality comes from the fact that $\sqep{v} + \sqep{\tv} \geq \max(\sqep{v},
\sqep{\tv})$ and
$\abs{\espn{g^2} - g^2} \leq \espn{g^2} + g^2$.
Following \cite{ward_adagrad}, we can use the following inequality to bound $\kappa$ and $\rho$,
\begin{equation}
    \label{eq:square_bound}
    \forall \lambda >0,\, x, y \in \reel, xy \leq \frac{\lambda}{2}x^2 + \frac{y^2}{2 \lambda}.
\end{equation}
First applying \eqref{eq:square_bound} to $\kappa$ with $$\lambda = \frac{\sqep{\tv}}{2}, \; x =
\frac{\abs{G}}{\sqep{\tv}}, \; y = \frac{\abs{g}\espn{g^2}}{\sqep{\tv}\sqep{v}},$$
we obtain
\begin{align*}
    \kappa \leq \frac{G^2}{4 \sqep{\tv}} + \frac{g^2 \espn{g^2}^2}{(\eps + \tv)^{3/2}(\eps + v)}.
\end{align*}
Given that $\eps + \tv \geq \espn{g^2}$ and taking the conditional expectation, we can simplify as
\begin{align}
\label{eq:bound_kappa}
    \espn{\kappa} \leq \frac{G^2}{4 \sqep{\tv}} + \frac{\espn{g^2}}{\sqep{\tv}}\espn{\frac{g^2}{\eps
    + v}}.
\end{align}
Given that $\sqrt{\espn{g^2}} \leq \sqep{\tv}$ and $\sqrt{\espn{g^2}} \leq R$, we can simplify~\eqref{eq:bound_kappa} as
\begin{align}
\label{eq:bound_kappa_simple}
    \espn{\kappa} \leq \frac{G^2}{4 \sqep{\tv}} + R\espn{\frac{g^2}{\eps+ v}}.
\end{align}

Now turning to $\rho$, 
we use \eqref{eq:square_bound} with $$
\lambda = \frac{\sqep{\tv}}{2
\espn{g^2}},
\; x = \frac{\abs{Gg}}{\sqep{\tv}}, \; y = \frac{g^2}{\eps + v},$$
we obtain
\begin{align}
    \rho \leq \frac{G^2}{4 \sqep{\tv}}\frac{g^2}{\espn{g^2}} +
    \frac{\espn{g^2}}{\sqep{\tv}}\frac{g^4}{(\eps + v)^2},
    \label{eq:possible_div_zero}
\end{align}
Given that $\eps + v \geq g^2$ and taking the conditional expectation we obtain
\begin{align}
\label{eq:bound_rho}
\hspace*{-.2cm} 
    \espn{\rho} \leq \frac{G^2}{4 \sqep{\tv}} + \frac{\espn{g^2}}{\sqep{\tv}}\espn{\frac{g^2}{\eps +
    v}},
\end{align}
which we simplify using the same argument as for~\eqref{eq:bound_kappa_simple} into
\begin{align}
\label{eq:bound_rho_simple}
    \espn{\rho} \leq \frac{G^2}{4 \sqep{\tv}} + R\espn{\frac{g^2}{\eps +v}}.
\end{align}
Notice that in~\eqref{eq:possible_div_zero}, we possibly divide by zero. It suffice to notice that if $\espn{g^2} = 0$ then $g^2 = 0$ a.s. so that $\rho = 0$ and \eqref{eq:bound_rho_simple} is still verified.
Summing \eqref{eq:bound_kappa_simple} and \eqref{eq:bound_rho_simple} we can bound
\begin{align}
\label{eq:descent_proof:last}
    \espn{\abs{A}} \leq \frac{G^2}{2 \sqep{\tv}} + 2 R\espn{\frac{g^2}{\eps + v}}.
\end{align}
Injecting \eqref{eq:descent_proof:last} and \eqref{eq:uncorr_esp_is_g2} into \eqref{eq:descent_proof} finishes the proof.
\end{proof}

Anticipating on Section~\ref{sec:proof_main},
the previous Lemma gives us a bound on the deviation
from a descent direction.
While for a specific iteration, this deviation can take us away from a descent direction, the next lemma tells us that the sum of those deviations cannot grow larger than a logarithmic term. This key insight introduced in~\cite{ward_adagrad} is what makes the proof work.

\begin{lemma}[sum of ratios with the denominator being the sum of past numerators]
\label{lemma:sum_ratio}
We assume we have $0 < \beta_2 \leq 1$ and a non-negative sequence $(a_n)_{n\in \nat^*}$. We define for all $n\in\nat^*$,
$b_n = \sum_{j=1}^n \beta_2^{n - j} a_j$. We have
\begin{equation}
    \label{eq:sum_ratio}
    \sum_{j=1}^N \frac{a_j}{\epsilon + b_j} \leq \ln\left(1 + \frac{b_N}{\epsilon}\right) - N
    \ln(\beta_2).
\end{equation}
\end{lemma}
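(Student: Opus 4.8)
The plan is to bound each summand $a_j/(\epsilon+b_j)$ by a telescoping difference of logarithms and then collapse the sum. First I would record the recursion $b_j = \beta_2 b_{j-1} + a_j$ (with the convention $b_0 = 0$), so that $a_j = (\epsilon + b_j) - (\epsilon + \beta_2 b_{j-1})$; note that $\epsilon + b_j$ and $\epsilon + \beta_2 b_{j-1}$ are both at least $\epsilon > 0$, and the former dominates the latter since $a_j \ge 0$.

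The key ingredient is the tangent-line (concavity) bound for the logarithm: for any $0 < x \le y$ one has $\ln(x) \le \ln(y) + (x-y)/y$, i.e. $\ln(y) - \ln(x) \ge (y-x)/y$. Applying this with $y = \epsilon + b_j$ and $x = \epsilon + \beta_2 b_{j-1}$ yields
\begin{equation*}
\frac{a_j}{\epsilon + b_j} \;\le\; \ln(\epsilon + b_j) - \ln(\epsilon + \beta_2 b_{j-1}).
\end{equation*}
Next I would use $\beta_2 \le 1$, which gives $\epsilon + \beta_2 b_{j-1} \ge \beta_2(\epsilon + b_{j-1})$ and hence $-\ln(\epsilon + \beta_2 b_{j-1}) \le -\ln\beta_2 - \ln(\epsilon + b_{j-1})$. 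Summing the resulting inequality $a_j/(\epsilon+b_j) \le \ln(\epsilon+b_j) - \ln(\epsilon + b_{j-1}) - \ln\beta_2$ over $j = 1, \dots, N$, the $\ln(\epsilon + b_j)$ terms telescope and leave $\ln(\epsilon + b_N) - \ln(\epsilon + b_0) - N\ln\beta_2 = \ln(1 + b_N/\epsilon) - N\ln\beta_2$, which is exactly \eqref{eq:sum_ratio}.

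There is no serious obstacle here; the only points requiring care are the orientation of the concavity inequality (one must expand $\ln$ around the \emph{larger} point $\epsilon + b_j$ so that the remainder has the favorable sign) and the degenerate cases — but since $\epsilon > 0$ every denominator is bounded away from $0$, so $a_j = 0$ or $b_{j-1} = 0$ cause no difficulty. When $\beta_2 = 1$ (the Adagrad case) the term $-N\ln\beta_2$ vanishes and the bound reduces to $\ln(1 + b_N/\epsilon)$.
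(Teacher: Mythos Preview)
Your proof is correct and essentially identical to the paper's: both use the concavity of $\ln$ to bound $a_j/(\epsilon+b_j)$ by $\ln(\epsilon+b_j)-\ln(\epsilon+\beta_2 b_{j-1})$, then split off a $-\ln(\beta_2)$ contribution so that the remaining logarithms telescope. The paper writes the split as $\ln\!\frac{\epsilon+b_j}{\epsilon+b_{j-1}}+\ln\!\frac{\epsilon+b_{j-1}}{\epsilon+\beta_2 b_{j-1}}$, which is exactly your step $\epsilon+\beta_2 b_{j-1}\ge\beta_2(\epsilon+b_{j-1})$ in different notation.
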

\begin{proof}
Given that $\ln$ is increasing, and the fact that $b_j > a_j \geq 0$, we have for all $j \in \nat^*$,
\begin{align*}
\frac{a_j}{\epsilon + b_j} &\leq \ln(\epsilon + b_j) - \ln(\epsilon + b_j - a_j)\\
     &=\ln(\epsilon + b_j) - \ln(\epsilon + \beta_2 b_{j-1})\\
    &= \ln\left(\frac{\epsilon + b_j}{\epsilon + b_{j-1}}\right) + \ln\left(\frac{\epsilon +
    b_{j-1}}{\epsilon + \beta_2 b_{j-1}}\right).
\end{align*}
The first term forms a telescoping series, while 
the second one is bounded by $-\ln(\beta_2)$. Summing over all $j\in [N]$ gives
the desired result.
\end{proof}

\subsection{Proof of Adam and Adagrad without momentum}
\label{sec:proof_main}
Let us take an iteration $n\in \nat^*$, we define the update $u_n \in \reel^d$: 
   \begin{equation}
       \forall i\in[d],  u_{n, i} = \frac{\nabla_i f_n(x_{n-1})}{\sqep{v_{n, i}}}.
   \end{equation}

\paragraph{Adagrad.}
As explained in Section \ref{sec:adamethods}, we have $\alpha_n = \alpha$ for $\alpha >0$.
Using the smoothness of $F$ \eqref{hyp:smooth}, we have
\begin{equation}
\label{eq:ada_smooth}
    F(x_{n+1})  \leq F(x_n) - \alpha \nabla F(x_n)^T u_n
  + \frac{\alpha^2 L}{2}\norm{u_n}^2_2.
\end{equation}
Taking the conditional expectation with respect to $f_0, \ldots, f_{n-1}$ we can apply the descent
Lemma~\ref{lemma:descent_lemma}.
Notice that due to the a.s. $\ell_\infty$ bound on the gradients
\eqref{hyp:r_bound}, we have
for any $i\in[d]$,  $\sqep{\tv_{n, i}} \leq R \sqrt{n}$, so that,
\begin{equation}
\label{eq:adagrad_use_r_bound}
   \frac{\alpha\sqdif{_iF(x_{n-1})}}{2\sqep{\tv_{n, i}}} \geq \frac{\alpha\sqdif{_iF(x_{n-1})}}{2 R\sqrt{n}}.
\end{equation}
This gives us
\begin{align*}
    &\espn{F(x_{n})} \leq F(x_{n-1}) -  \frac{\alpha}{2 R\sqrt{n}}\norm{\nabla F(x_{n-1})}_2^2 
    +\left(2 \alpha R + \frac{\alpha^2 L}{2}\right)\espn{\norm{u_n}^2_2}.
\end{align*}
Summing the previous inequality for all $n\in [N]$, taking the complete expectation, and using that $\sqrt{n} \leq \sqrt{N}$ gives us,
\begin{align*}
    &\esp{F(x_{N})} \leq F(x_0) -  \frac{\alpha}{2 R\sqrt{N}}\sum_{n=0}^{N-1}\esp{\norm{\nabla
    F(x_n)}_2^2} +\left(2 \alpha R + \frac{\alpha^2
        L}{2}\right)\sum_{n=0}^{N-1}\esp{\norm{u_n}^2_2}.
\end{align*}
From there, we can bound the last sum on the right hand side
using Lemma~\ref{lemma:sum_ratio} once for each dimension. Rearranging the terms, we obtain the result of Theorem~\ref{theo:adagrad}.

\paragraph{Adam.}
As given by \eqref{eq:step_size_adam} in Section~\ref{sec:adamethods}, we have $\alpha_n = \alpha \sqrt{\frac{1 - \beta_2^n}{1-\beta_2}}$ for $\alpha > 0$.
Using the smoothness of $F$ defined in \eqref{hyp:smooth}, we have
    \begin{equation}
\label{eq:adam_smooth}
F(x_n)  \leq F(x_{n-1}) - \alpha_n \nabla F(x_{n-1})^T u_n
  + \frac{\alpha_n^2 L}{2}\norm{u_n}^2_2.        
    \end{equation}
We have
for any $i\in[d]$,  $\sqep{\tv_{n, i}} \leq R \sqrt{\sum_{j=0}^{n-1} \beta_2^j} = R \sqrt{\frac{1 - \beta_2^n}{1-\beta_2}}$, thanks to the a.s. $\ell_\infty$ bound on the gradients
\eqref{hyp:r_bound}, so that,
\begin{equation}
\label{eq:adam_use_r_bound}
   \alpha_n\frac{\sqdif{_iF(x_{n-1})}}{2\sqep{\tv_{n, i}}} \geq \frac{\alpha\sqdif{_iF(x_{n-1})}}{2 R}.
\end{equation}
Taking the conditional expectation with respect to $f_1, \ldots, f_{n-1}$ we can apply the descent
Lemma~\ref{lemma:descent_lemma} and use \eqref{eq:adam_use_r_bound} to obtain from \eqref{eq:adam_smooth},
\begin{align*}
    &\espn{F(x_{n})} \leq F(x_{n-1}) -  \frac{\alpha}{2 R}\norm{\nabla F(x_{n-1})}_2^2  +\left(2 \alpha_n R + \frac{\alpha_n^2 L}{2}\right)\espn{\norm{u_n}^2_2}.
\end{align*}
Given that $\beta_2 < 1$, we have $\alpha_n \leq \frac{\alpha}{\sqrt{1 - \beta_2}}$. Summing the
previous inequality for all $n \in [N]$ and taking the complete expectation yields
\begin{align*}
    &\esp{F(x_{N})} \leq F(x_0) -  \frac{\alpha}{2 R}\sum_{n=0}^{N-1}\esp{\norm{\nabla F(x_n)}_2^2} + \left(\frac{2 \alpha R}{\sqrt{1- \beta_2}} + \frac{\alpha^2 L}{2(1 -
        \beta_2)}\right)\sum_{n=0}^{N-1}\esp{\norm{u_n}^2_2}.
\end{align*}
Applying Lemma~\ref{lemma:sum_ratio} for each dimension and rearranging the terms
finishes the proof of Theorem~\ref{theo:adam}.

\begin{figure*}[t]
\centering
\begin{subfigure}[t]{.45\textwidth}
    \centering
  \begin{tikzpicture}[
    lr/.style={color=blue, solid,mark options={fill=blue}},
    mom/.style={color=red, solid,mark options={fill=red}},
    decay/.style={color=purple, solid,mark options={fill=purple}},
    every node/.style={font=\fontsize{8}{5}\selectfont}
  ]
    \begin{loglogaxis}[
        xlabel={Parameter},
        ylabel={$\esp{\norm{\nabla F(x_\tau)}_2^2}$},
        name=lrplot,
        yticklabel pos=left,
        style={font=\footnotesize},
        width=\linewidth,
        ]

        \pgfplotstableread{data/alpha.txt}\tablea
        \pgfplotstableread{data/beta1.txt}\tablem
        \pgfplotstableread{data/beta2.txt}\tabled
        \addplot+[lr,mark=*,error bars/.cd, y dir=both,
        y explicit] table[x=x, y=y, y error=yerr] {\tablea};
        \addplot+[mom,mark=diamond*,error bars/.cd, y dir=both,
        y explicit] table[x=x, y=y, y error=yerr] {\tablem};
        \addplot+[decay,mark=square*,error bars/.cd, y dir=both,
        y explicit] table[x=x, y=y, y error=yerr] {\tabled};
    \end{loglogaxis}
\end{tikzpicture}
\caption{Average squared norm of the gradient on a toy task, 
see Section~\ref{sec:experiments}, for more details.
For the $\alpha$ and $1- \beta_2$ curves,
we initialize close to the optimum
to make the $F_0 - F_*$ term negligible.
}
\label{fig:exp_toy}
\end{subfigure}%
\hfill%
\begin{subfigure}[t]{.45\textwidth}
    \centering
  \begin{tikzpicture}[
    lr/.style={color=blue, solid,mark options={fill=blue}},
    mom/.style={color=red, solid,mark options={fill=red}},
    decay/.style={color=purple, solid,mark options={fill=purple}},
    every node/.style={font=\fontsize{8}{5}\selectfont}
  ]
    \begin{loglogaxis}[
        xlabel={Parameter},
        name=lrplot,
        yticklabel pos=right,
        label={$\esp{\norm{\nabla F(x_\tau)}_2^2}$},
        style={font=\footnotesize},
        width=\linewidth,
        legend entries={
            {{$\alpha$},
             {$1 - \beta_1$},
             {$1 - \beta_2$},
             }
        },
        legend style={
            overlay,
            font=\fontsize{5}{5}\selectfont,at={(-0.02,0.2)},anchor=east,
            legend columns=1, fill=white,draw=black}, 
        ]

        \pgfplotstableread{data/cif_lr.txt}\tablea
        \pgfplotstableread{data/cif_beta1.txt}\tablem
        \pgfplotstableread{data/cif_beta2.txt}\tabled
        \addplot+[lr,mark=*,error bars/.cd, y dir=both,
        y explicit] table[x=x, y=y, y error=yerr] {\tablea};
        \addplot+[mom,mark=diamond*,error bars/.cd, y dir=both,
        y explicit] table[x=x, y=y, y error=yerr] {\tablem};
        \addplot+[decay,mark=square*,error bars/.cd, y dir=both,
        y explicit] table[x=x, y=y, y error=yerr] {\tabled};
    \end{loglogaxis}
\end{tikzpicture}
\caption{
Average squared norm of the gradient of a small convolutional model~\cite{gitman2017comparison}
trained on CIFAR-10, with a random initialization. The full gradient is evaluated every epoch. 
}
\label{fig:exp_cifar}
\end{subfigure}%
\caption{Observed average squared norm of the objective gradients after a fixed number of iterations when varying
a single parameter out of $\alpha$, $1 - \beta_1$ and $1 - \beta_2$, 
on a toy task (left, $10^6$ iterations) and on CIFAR-10 (right,  600 epochs with a batch size 128).
All curves are averaged over 3 runs,
error bars are negligible except for small values of $\alpha$ on CIFAR-10.
See Section~\ref{sec:experiments} for details.}
\label{fig:exps}
\end{figure*}

\begin{figure}[t]
\centering
\hspace{0.1cm}
\begin{tikzpicture}[
  every node/.style={font=\fontsize{8}{5}\selectfont},
]
  \begin{semilogyaxis}[
      xlabel={Epoch},
      ylabel={Cross-Entropy},
      name=lrplot,
      yticklabel pos=left,
      style={font=\footnotesize},
      width=0.3\linewidth,
      title={All corrections},
      ]

      \pgfplotstableread[col sep=comma]{data/beta1_true_sub.csv}\tablea
      \addplot+[blue,only marks,mark=+] table[x=epoch, y=loss] {\tablea};
  \end{semilogyaxis}
\end{tikzpicture}
  \begin{tikzpicture}[
    every node/.style={font=\fontsize{8}{5}\selectfont}
  ]
    \begin{semilogyaxis}[
        xlabel={Epoch},
        name=lrplot,
        yticklabel pos=left,
        style={font=\footnotesize},
        width=0.3\linewidth,
        title={Dropping correction for $m_n$},
        ]
        \pgfplotstableread[col sep=comma]{data/beta1_false_sub.csv}\tableb
        \addplot+[red, only marks,mark=x] table[x=epoch, y=loss] {\tableb};
    \end{semilogyaxis}
\end{tikzpicture}
  \begin{tikzpicture}[
    every node/.style={font=\fontsize{8}{5}\selectfont}
  ]
    \begin{semilogyaxis}[
        xlabel={Epoch},
        name=lrplot,
        yticklabel pos=left,
        style={font=\footnotesize},
        width=0.3\linewidth,
        title={Dropping correction for $v_n$},
        ]
        \pgfplotstableread[col sep=comma]{data/beta2_false_sub.csv}\tablec
        \addplot+[violet,only marks,mark=star] table[x=epoch, y=loss] {\tablec};
    \end{semilogyaxis}
\end{tikzpicture}
\\
\begin{tikzpicture}[
  every node/.style={font=\fontsize{8}{5}\selectfont},
]
  \begin{semilogyaxis}[
      xlabel={Epoch},
      ylabel={$\esp{\norm{\nabla F(x_n)}_2^2}$},
      name=lrplot,
      yticklabel pos=left,
      style={font=\footnotesize},
      width=0.3\linewidth,
      ]

      \pgfplotstableread[col sep=comma]{data/beta1_true_sub.csv}\tablea
      \addplot+[blue,only marks,mark=+] table[x=epoch, y=g_norm] {\tablea};
  \end{semilogyaxis}
\end{tikzpicture}
  \begin{tikzpicture}[
    every node/.style={font=\fontsize{8}{5}\selectfont}
  ]
    \begin{semilogyaxis}[
        xlabel={Epoch},
        name=lrplot,
        yticklabel pos=left,
        style={font=\footnotesize},
        width=0.3\linewidth,
        ]
        \pgfplotstableread[col sep=comma]{data/beta1_false_sub.csv}\tableb
        \addplot+[red, only marks,mark=x] table[x=epoch, y=g_norm] {\tableb};
    \end{semilogyaxis}
\end{tikzpicture}
  \begin{tikzpicture}[
    every node/.style={font=\fontsize{8}{5}\selectfont}
  ]
    \begin{semilogyaxis}[
        xlabel={Epoch},
        name=lrplot,
        yticklabel pos=left,
        style={font=\footnotesize},
        width=0.3\linewidth,
        ymin=0.01,
        ymax=100,
        ]
        \pgfplotstableread[col sep=comma]{data/beta2_false_sub.csv}\tablec
        \addplot+[violet,only marks,mark=star] table[x=epoch, y=g_norm] {\tablec};
    \end{semilogyaxis}
\end{tikzpicture}
\caption{
Training trajectories for varying values of $\alpha\in\{10^{-4}, 10^{-3}\}$, 
$\beta_1 \in \{0., 0.5, 0.8, 0.9, 0.99\}$ and $\beta_2\in\{0.9, 0.99, 0.999, 0.9999\}$.
The top row (resp. bottom) gives the training loss (resp. squared norm of the expected gradient). The left column uses all corrective terms in the original Adam algorithm, the middle column drops the corrective term on $m_n$ (equivalent to our proof setup), and the right column drops the corrective term on $v_n$. We notice a limited impact when dropping the corrective term on $m_n$, but dropping the corrective term on $v_n$ has a much stronger impact.
}
\label{fig:corrective_terms}
\end{figure}%

\section{Experiments}
\label{sec:experiments}

On Figure~\ref{fig:exps}, we compare the effective dependency of the average squared
norm of the gradient in the parameters
$\alpha$, $\beta_1$ and $\beta_2$ for Adam, when used
on a toy task and CIFAR-10.

\subsection{Setup}
\paragraph{Toy problem.}

In order to support the bounds presented in Section~\ref{sec:main_results}, in particular the dependency in $\beta_2$, we test Adam on a specifically crafted toy problem.
We take $x \in \reel^6$ and define for all
$i \in [6]$, $p_i = 10^{-i}$. 
We take $(Q_i)_{i\in[6]}$, Bernoulli variables with $\proba{Q_i = 1} = p_i$. We then define $f$ for all $x\in\reel^d$ as

\begin{equation}
\label{eq:exp:toy}
   f(x) = \sum_{i\in[6]} (1 - Q_i) \huber(x_i - 1)  
    + \frac{Q_i}{\sqrt{p_i}} \huber(x_i + 1),
\end{equation}
with for all $y\in \reel$,
\begin{equation*}
    \huber(y) = \begin{cases}
        &\frac{y^2}{2} \quad\text{when $\abs{y} \leq 1$}\\
        & \abs{y} - \frac{1}{2} \quad\text{otherwise}.
    \end{cases}
\end{equation*}

Intuitively, each coordinate is pointing most of the time towards 1, but exceptionally towards -1 with a weight of $1/\sqrt{p_i}$.
Those rare events happens less and less often as $i$ increase,
but with an increasing weight. Those weights are chosen so that
all the coordinates of the gradient have the same variance\footnote{We deviate from the a.s. bounded gradient assumption
for this experiment, see Section~\ref{sec:analysis}
for a discussion on a.s. bound vs bound in expectation.}.
It is necessary to take different probabilities for each coordinate. If we use the same $p$ for all, we observe a phase transition when $1 - \beta_2 \approx p$, but not the continuous improvement we obtain on Figure~\ref{fig:exp_toy}. 

We plot the variation of $\esp{\norm{F(x_\tau)}_2^2}$ after $10^6$
iterations with batch size 1 when varying either $\alpha$, $1- \beta_1$ or $1- \beta_2$ through a range of 13 values uniformly
spaced in log-scale between $10^{-6}$ and $1$.
When varying $\alpha$,  we take $\beta_1 = 0$ and $\beta_2 = 1 - 10^{-6}$. When varying $\beta_1$, we take $\alpha = 10^{-5}$ and
$\beta_2 = 1 - 10^{-6}$ (i.e. $\beta_2$ is so that we are in the Adagrad-like regime). Finally, when varying $\beta_2$, we take $\beta_1 =0$ and $\alpha =10^{-6}$.
We start from $x_0$
close to the optimum by running first $10^6$ iterations with $\alpha=10^{-4}$, then $10^{6}$ iterations with $\alpha=10^{-5}$,
always with $\beta_2 = 1 - 10^{-6}$. This allows to have $F(x_0) - F_* \approx 0$ in \eqref{eq:main_bound_adam} and \eqref{eq:main_bound_adam_momentum} and focus on the second part of
both bounds.
All curves are averaged over three runs. Error bars are plotted but not visible in log-log scale.

\paragraph{CIFAR-10.}

We train a simple convolutional network~\citep{gitman2017comparison} on the CIFAR-10\footnote{\url{https://www.cs.toronto.edu/~kriz/cifar.html}} image classification dataset.
Starting from a random initialization, we train the model on a single V100 for 600 epochs with a batch size of 128, evaluating the full training gradient after each epoch.
This is a proxy for $\esp{\norm{F(x_\tau)}_2^2}$, which would be to costly to evaluate exactly.
All runs use the default config $\alpha = 10^{-3}$, $\beta_2=0.999$ and $\beta_1 = 0.9$, and we then change one of the parameter.

We take $\alpha$ from a uniform range in log-space between $10^{-6}$ and $10^{-2}$ with 9 values, for $1 - \beta_1$ the range is from $10^{-5}$ to $0.3$ with 9 values, and for $1 - \beta_2$, from $10^{-6}$ to $10^{-1}$ with 11 values. Unlike for the toy problem, we do not initialize close to the optimum, as even after 600 epochs, the norm of the gradients indicates that we are not at a critical point.
All curves are averaged over three runs. Error bars are plotted but not visible in log-log scale, except for large values of $\alpha$.

\subsection{Analysis}

\paragraph{Toy problem.}
Looking at Figure \ref{fig:exp_toy}, we observe a continual improvement as $\beta_2$ increases.
Fitting a linear regression in log-log scale of $\mathbb{E}[\norm{\nabla F(x_\tau)}_2^2]$
with respect to $1 - \beta_2$ gives a slope of 0.56 which is compatible with our bound \eqref{eq:main_bound_adam}, in particular the dependency in $O(1/\sqrt{1 - \beta_2})$.
As we initialize close to the optimum, a small step size $\alpha$ yields as expected the best performance. Doing the same regression in log-log scale, we find a slope of 0.87, which 
is again compatible with the $O(\alpha)$ dependency of the second term in \eqref{eq:main_bound_adam}.
Finally, we observe a limited impact of $\beta_1$, except when $1 - \beta_1$ is small.
The regression in log-log scale gives a slope of -0.16, while our bound predicts a slope of -1.

\paragraph{CIFAR 10.}
Let us now turn to Figure~\ref{fig:exp_cifar}.
As we start from random weights for this problem, we observe that a large step size
gives the best performance, although we observe a high variance for the largest $\alpha$. This indicates that training becomes unstable for large $\alpha$, which is 
not predicted by the theory. This is likely a consequence of the bounded gradient assumption \eqref{hyp:r_bound} not being verified for deep neural networks.
We observe a small improvement as $1 - \beta_2$ decreases, although nowhere near what we observed on our toy problem. Finally, we observe a sweet spot for the momentum $\beta_1$,
not predicted by our theory. We conjecture that this is due to the variance reduction effect of momentum (averaging of the gradients over multiple mini-batches, while the weights have not moved so much as to invalidate past information).

\subsection{Impact of the Adam corrective terms}
\label{sec:impact_corrective}
Using the same experimental setup on CIFAR-10, we compare the impact of removing either of the corrective term of the original Adam algorithm~\citep{adam}, as discussed in Section~\ref{sec:adamethods}.
We ran a cartesian product of training for 100 epochs, with $\beta_1 \in \{ 0, 0.5, 0.8, 0.9, 0.99\}$, $\beta_2 \in \{0.9, 0.99, 0.999, 0.9999\}$, and $\alpha \in \{10^{-4}, 10^{-3}\}$.
We report both the training loss and norm of the expected gradient on Figure~\ref{fig:corrective_terms}.
We notice a limited difference when dropping the corrective term on $m_n$, but dropping the term $v_n$ has an important impact on the training trajectories. This confirm our motivation for simplifying the proof by removing the corrective term on the momentum.

\section{Conclusion}
We provide a simple proof on the convergence of Adam and Adagrad without heavy-ball style momentum. 
Our analysis highlights a link between the two algorithms: with right the hyper-parameters, Adam converges like Adagrad.
The extension to heavy-ball momentum is more complex, but we
significantly improve the dependence on the momentum parameter for Adam, Adagrad, as well as SGD.
We exhibit a toy problem where the dependency on $\alpha$ and $\beta_2$ 
experimentally matches our prediction. However, 
we do not predict the practical interest of momentum, so that improvements to the proof are needed for future work.

\subsubsection*{Broader Impact Statement}

The present theoretical results on the optimization of non convex losses in a stochastic settings impact our understanding
of the training of deep neural network. It might allow a deeper understanding of neural network training dynamics and thus
reinforce any existing deep learning applications. There would be however no direct possible negative impact to society.

\bibliography{references}
\bibliographystyle{tmlr}

\appendix
\renewcommand{\thesection}{\Alph{section}}
\onecolumn
\section*{\Large Supplementary material for A Simple Convergence Proof of Adam and Adagrad}
\setcounter{section}{0}
\numberwithin{theorem}{section} 
\numberwithin{equation}{section}
\renewcommand\theHsection{\Alph{section}}

\section*{Overview}

In Section~\ref{app:heavy}, we detail the results for the convergence of Adam and Adagrad with heavy-ball momentum.
For an overview of the contributions of our proof technique, see Section~\ref{app:sec:contrib}.

Then in Section~\ref{app:sec:sgd}, we show how our technique also applies to SGD and improves its dependency in $\beta_1$ compared with previous work by \citet{yang2016unified},
from $O((1 - \beta_1)^{-2})$ to $O(1 - \beta_1)^{-1}$. The proof is simpler than for Adam/Adagrad, and show the generality of our technique.

\section{Convergence of adaptive methods with heavy-ball momentum}
\label{app:heavy}
\subsection{Setup and notations}
We recall the dynamic system introduced in Section~\ref{sec:problem}.
In the rest of this section, we take an iteration $n \in \nat^*$, and when needed, $i \in [d]$ refers to a specific coordinate.
Given $x_0 \in\reel^d$ our starting point, $m_0 = 0$, and $v_0 = 0$, we define
\begin{align}
    \label{app:eq:system}
    \begin{cases}
    m_{n, i} &= \beta_1 m_{n-1, i} + \nabla_i f_n(x_{n-1}),\\
    v_{n, i} &= \beta_2 v_{n-1, i} + \left(\nabla_i f_n(x_{n-1})\right)^2,\\
    x_{n, i} &= x_{n-1, i} - \alpha_n \frac{m_{n, i}}{\sqrt{\epsilon + v_{n, i}}}.
    \end{cases}
\end{align}
For Adam, the step size is given by
\begin{align}
\label{app:eq:alpha_adam}
    \alpha_n = \alpha (1 - \beta_1) \sqrt{\frac{1-\beta_2^n}{1 - \beta_2}}.
\end{align}
For Adagrad (potentially extended with heavy-ball momentum), we have $\beta_2 = 1$ and 
\begin{align}
\label{app:eq:alpha_adagrad}
    \alpha_n = \alpha (1 - \beta_1).
\end{align}
Notice we include the factor $1 - \beta_1$ in the step size rather than in \eqref{app:eq:system}, as this allows for a more elegant proof. 
The original Adam algorithm included compensation factors for both $\beta_1$ and $\beta_2$~\citep{adam} to correct the initial scale of $m$ and $v$ which are initialized at 0. Adam would be exactly recovered by replacing \eqref{app:eq:alpha_adam} with
\begin{equation}
    \label{app:eq:alpha_true_adam}
    \alpha_n = \alpha \frac{1 - \beta_1}{1 - \beta_1^n}\sqrt{\frac{1-\beta_2^n}{1 - \beta_2}}.
\end{equation}
However, the denominator $1 - \beta_1^n$ potentially makes $(\alpha_n)_{n \in \nat^*}$ non monotonic, which complicates the proof. Thus, we instead replace the denominator by its limit value for $n \rightarrow \infty$.
This has little practical impact as (i) early iterates are noisy because $v$ is averaged over a small number of gradients, so making smaller step can be more stable, (ii) for $\beta_1 = 0.9$ \citep{adam}, \eqref{app:eq:alpha_adam} differs from \eqref{app:eq:alpha_true_adam} only for the first 50 iterations.

Throughout the proof we note $\espn{\cdot}$ the conditional expectation with respect to $f_1, \ldots, f_{n-1}$. In particular, $x_{n-1}$, $v_{n-1}$ is deterministic knowing $f_1, \ldots, f_{n-1}$.
We introduce 
\begin{equation}G_n = \nabla F(x_{n-1}) \quad \text{and} \quad g_n = \nabla f_n(x_{n-1}).\end{equation}
Like in Section~\ref{sec:proof_main}, we introduce the update $u_n \in \reel^d$, as well as the update without heavy-ball momentum $U_n \in \reel^d$:
\begin{equation}
\label{app:eq:def_u}
   u_{n, i} = \frac{m_{n, i}}{\sqep{v_{n, i}}} \quad \text{and}\quad
    U_{n, i} = \frac{g_{n, i}}{\sqep{v_{n, i}}}.
\end{equation}
For any $k \in \nat$ with $k < n$, we define $\tv_{n, k} \in\reel^d$ by
\begin{equation}
\tv_{n, k,  i} = \beta_2^{k} v_{n - k, i} + \esps{n - k - 1}{\sum_{j=n- k + 1}^{n}\beta_2^{n-j}g_{j,i}^2},
\end{equation}
i.e. the contribution from the $k$ last gradients are replaced by their expected value for know values of $f_1, \ldots, f_{n - k - 1}$.
For $k = 1$, we recover the same definition as in \eqref{eq:def_tv}.

\subsection{Results}
For any total number of iterations $N \in \nat^*$, we define $\tau_N$ a random index with value in $\{0, \ldots, N - 1\}$, verifying
\begin{align}
\label{app:eq:def_tau}
    \forall j\in \nat, j < N, \proba{\tau = j} \propto 1 - \beta_1^{N - j}.
\end{align}
If $\beta_1 = 0$, this is equivalent to sampling $\tau$ uniformly in $\{0, \ldots, N - 1\}$. If $\beta_1 > 0$,
 the last few $\frac{1}{1 - \beta_1}$ iterations are sampled rarely, and all iterations older than a few times that number are sampled almost uniformly.
 We bound the expected squared norm of the total gradient at iteration~$\tau$, which is standard for non convex stochastic optimization~\citep{random_sgd}.
 
 Note that like in previous works, the bound worsen as $\beta_1$ increases, with a dependency of the form $O((1 - \beta_1)^{-1})$. This is a significant improvement over the existing bound for Adagrad with heavy-ball momentum, which scales as $(1 - \beta_1)^{-3}$ \citep{adagrad_proof_vector},  or the best known bound for Adam which scales as  $(1 - \beta_1)^{-5}$ \citep{zou2019sufficient}.
 
 Technical lemmas to prove the following theorems are introduced in Section~\ref{app:sec:lemmas}, while the proof of Theorems \ref{theo:adagrad_momentum} and \ref{theo:adam_momentum} are provided in Section~\ref{app:sec:proofs}.

\firstada*

\firstadam*

\subsection{Analysis of the results with momentum}
\label{app:sec:analysis}

First notice that taking $\beta_1 \rightarrow 0$ in Theorems \ref{theo:adagrad_momentum} and \ref{theo:adam_momentum}, we almost recover the same result as stated in \ref{theo:adam} and \ref{theo:adagrad}, only losing on the term $4 d R^2$ which becomes $12 d R^2$.

\paragraph{Simplified expressions with momentum} Assuming $N \gg \frac{\beta_1}{1 - \beta_1}$ and $\beta_1 / \beta_2 \approx \beta_1$, which is verified for typical values of $\beta_1$ and $\beta_2$ \citep{adam}, it is possible to simplify the bound for Adam \eqref{eq:main_bound_adam_momentum} as
\begin{align}
\nonumber
    &\esp{\norm{\nabla F(x_\tau)}^2} \lessapprox 2 R \frac{F(x_0) - F_*}{\alpha N} 
    \\
     &\qquad +\left( 
     \frac{\alpha d R L}{1 - \beta_2} + 
     \frac{12 d R^2}{(1 - \beta_1 )\sqrt{1 - \beta_2}} +
     \frac{2 \alpha^2 d L^2 \beta_1}{(1 - \beta_1) (1 - \beta_2)^{3/2}} 
     \right) \left(\frac{1}{N}\ln\left(1 + \frac{R^2}{\eps (1 - \beta_2)} \right) -
    \ln(\beta_2)\right).
    \label{app:eq:simple_adam}
\end{align}
Similarly, if we assume $N \gg \frac{\beta_1}{1 - \beta_1}$, we can simplify the bound for Adagrad \eqref{app:eq:main_bound_adagad_momentum} as
\begin{align}
    &\esp{\norm{\nabla F(x_\tau)}^2} \lessapprox 2 R \frac{F(x_0) - F_*}{\alpha\sqrt{N}}  
    +
    \frac{1}{\sqrt{N}}  \left(\alpha d R L + 
     \frac{12 d R^2}{1 - \beta_1} +
     \frac{2 \alpha^2 d L^2 \beta_1}{1 - \beta_1}\right)\ln\left(1 + \frac{N R^2}{\epsilon}\right),
    \label{app:eq:simple_adagrad}
\end{align}

\paragraph{Optimal finite horizon Adam is still Adagrad}
We can perform the same finite horizon analysis as in Section~\ref{sec:finite_adam}. If we take $\alpha = \frac{\tilde{\alpha}}{\sqrt{N}}$ and  $\beta_2 = 1 - 1 / N$, then \eqref{app:eq:simple_adam} simplifies to
\begin{align}
    &\esp{\norm{\nabla F(x_\tau)}^2} \lessapprox 2 R \frac{F(x_0) - F_*}{\tilde{\alpha}\sqrt{N}} + 
     \frac{1}{\sqrt{N}}\left( 
     \tilde{\alpha} d R L + 
     \frac{12 d R^2}{1 - \beta_1} +
     \frac{2 \tilde{\alpha}^2 d L^2 \beta_1}{1 - \beta_1}
     \right) \left(\ln\left(1 + \frac{N R^2}{\eps} \right) + 1\right)
    \label{app:eq:simple_adam_finite_horizon}.
\end{align}
The term $(1 - \beta_2)^{3/2}$ in the denominator in \eqref{app:eq:simple_adam} is indeed compensated by the $\alpha^2$ in the numerator and we again recover the proper $\ln(N)/\sqrt{N}$ convergence rate, which matches \eqref{app:eq:simple_adagrad} up to a $+1$ term next to the log.

\subsection{Overview of the proof, contributions and limitations}
\label{app:sec:contrib}

There is a number of steps to the proof. First we derive a Lemma similar in spirit to the descent Lemma~\ref{lemma:descent_lemma}.
There are two differences: first, when computing the dot product between the current expected gradient and each past gradient contained in the momentum, we have to re-center the expected gradient to its values in the past, using the smoothness assumption. Besides, we now have to decorrelate more terms between the numerator and denominator, as the numerator contains not only the latest gradient but a decaying sum of the past ones.
We similarly extend Lemma~\ref{lemma:sum_ratio} to support momentum specific terms. The rest of the proof follows mostly as in Section~\ref{sec:proofs}, except with a few more manipulation to
regroup the gradient terms coming from different iterations.

 Compared with previous work~\citep{adagrad_proof_vector,zou2019sufficient},
the re-centering of past gradients in \eqref{app:eq:descent_big}
is a key aspect to improve the dependency in $\beta_1$,
with a small price to pay using the smoothness of $F$ which is 
compensated by the introduction of extra $G_{n-k,i}^2$ in \eqref{app:lemma:descent_lemma}.
Then, a tight handling of the different summations as well as the the introduction of a non uniform sampling of the iterates \eqref{app:eq:def_tau}, which naturally arises when grouping the different terms in \eqref{app:eq:adagrad_tau_appear}, allow to obtain the overall improved dependency in $O((1 -\beta_1)^{-1})$.

The same technique can be applied to SGD, the proof becoming simpler as there is no correlation between the step size and the gradient estimate, see Section~\ref{app:sec:sgd}. If you want to better understand the handling of momentum without the added complexity of adaptive methods, we recommend starting with this proof.

A limitation of the proof technique is that we do not show that heavy-ball momentum can lead to a variance reduction of the update. Either more powerful probabilistic results, or extra regularity assumptions could allow to further improve our worst case bounds of the variance of the update, which in turn might lead to a bound with an improvement when using heavy-ball momentum.

\subsection{Technical lemmas}
\label{app:sec:lemmas}
We first need an updated version of~\ref{lemma:descent_lemma} that includes momentum.

\begin{lemma}[Adaptive update with momentum approximately follows a descent direction]
Given $x_0 \in \reel^d$, the iterates defined by the system~\eqref{app:eq:system} for $(\alpha_j)_{j\in\nat^*}$ that is non-decreasing, and under the conditions \eqref{hyp:bounded_below}, \eqref{hyp:r_bound}, and \eqref{hyp:smooth}, as well as $0 \leq \beta_1 < \beta_2 \leq 1 $, we have for all iterations $n \in \nat^*$,
\label{app:lemma:descent_lemma}
\begin{align}
\nonumber
   &\esp{\sum_{i\in[d]}G_{n, i}\frac{m_{n, i}}{\sqep{v_{n, i}}}} \geq
   \frac{1}{2}\left(\sum_{i\in[d]} \sum_{k=0}^{n - 1} \beta_1^k
   \esp{
        \frac{G_{n - k, i}^2}{\sqrt{\eps + \tv_{n, k + 1, i}}}}\right)
\\
   &\qquad -
            \frac{\alpha_n^2 L^2}{4 R}\sqrt{1 - \beta_1}\left( \sum_{l=1}^{n-1}\norm{u_{n-l}}_2^2\sum_{k=l}^{n - 1}\beta_1^k \sqrt{k} \right)
        -
         \frac{3 R}{\sqrt{1 - \beta_1}} \left(\sum_{k=0}^{n-1} \bbratio^k \sqrt{k+1}\norm{U_{n-k}}_2^2\right).
   \label{app:eq:descent_lemma}
\end{align}
\end{lemma}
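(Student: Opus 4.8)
The plan is to extend the argument behind Lemma~\ref{lemma:descent_lemma} with two additional ingredients: unrolling the momentum recursion, and using the smoothness of $F$ to re-center each past stochastic gradient around the current iterate. Unrolling \eqref{eq:m_update} gives $m_{n,i} = \sum_{k=0}^{n-1}\beta_1^k g_{n-k,i}$, so that $\sum_{i\in[d]} G_{n,i}\, m_{n,i}/\sqep{v_{n,i}} = \sum_{i\in[d]}\sum_{k=0}^{n-1}\beta_1^k\, G_{n,i}\, g_{n-k,i}/\sqep{v_{n,i}}$. For each lag $k$, the summand $g_{n-k,i}$ is an unbiased estimate of $G_{n-k,i}=\nabla_i F(x_{n-k-1})$, not of $G_{n,i}$, so I would split $G_{n,i} = G_{n-k,i} + (G_{n,i}-G_{n-k,i})$ and treat separately the ``aligned'' contribution $\beta_1^k\, G_{n-k,i}\, g_{n-k,i}/\sqep{v_{n,i}}$ and the ``drift'' contribution $\beta_1^k\, (G_{n,i}-G_{n-k,i})\, g_{n-k,i}/\sqep{v_{n,i}}$.

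For the aligned contribution I would follow the strategy of Lemma~\ref{lemma:descent_lemma}: replace $\sqep{v_{n,i}}$ by $\sqep{\tv_{n,k+1,i}}$, which by the definition of $\tv$ (cf.\ \eqref{eq:def_tv}) is measurable with respect to the past up to iteration $n-k-1$, so that $g_{n-k,i}$ is conditionally fresh and the conditional expectation of $G_{n-k,i}\, g_{n-k,i}/\sqep{\tv_{n,k+1,i}}$ equals $G_{n-k,i}^2/\sqep{\tv_{n,k+1,i}}$, the descent term of \eqref{app:eq:descent_lemma}. The replacement error $G_{n-k,i}\, g_{n-k,i}\bigl(1/\sqep{v_{n,i}} - 1/\sqep{\tv_{n,k+1,i}}\bigr)$ is controlled as in that lemma: write the difference of reciprocal square roots over a product of the two denominators, bound $\abs{\tv_{n,k+1,i}-v_{n,i}}$ by the sum over the $k+1$ most recent indices $j$ of $\beta_2^{n-j}$ times $g_{j,i}^2$ plus its conditional expectation, and apply \eqref{eq:square_bound} with well-chosen multipliers. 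Using $\eps + v_{n,i} \ge \beta_2^{k}(\eps + v_{n-k,i})$ and the fact that $\eps + \tv_{n,k+1,i}$ dominates the conditional expectation of the re-centered tail, one quarter of the descent term is produced (hence the $1/2$ in the final statement), together with a remainder that after summation organizes into a multiple of $\bbratio^k\sqrt{k+1}\,\norm{U_{n-k}}_2^2$; the series $\sum_k\bbratio^k$ converges because $\beta_1<\beta_2$.

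For the drift contribution I would use \eqref{hyp:smooth} and the assumed monotonicity of $(\alpha_j)_{j}$ to bound $\abs{G_{n,i}-G_{n-k,i}} \le L\norm{x_{n-1}-x_{n-k-1}}_2 \le L\alpha_n\sum_{l=1}^{k}\norm{u_{n-l}}_2$, and then apply \eqref{eq:square_bound} to the product $(G_{n,i}-G_{n-k,i})\cdot g_{n-k,i}/\sqep{v_{n,i}}$ with a lag-dependent multiplier of order $1/\sqrt{k}$. The term in $(G_{n,i}-G_{n-k,i})^2$, after Cauchy--Schwarz $\bigl(\sum_{l=1}^k\norm{u_{n-l}}_2\bigr)^2 \le k\sum_{l=1}^k\norm{u_{n-l}}_2^2$ and swapping the $k$ and $l$ summations, yields the $\norm{u_{n-l}}_2^2$ term of \eqref{app:eq:descent_lemma}; the term in $g_{n-k,i}^2/(\eps+v_{n,i}) \le \beta_2^{-k}\norm{U_{n-k}}_2^2$ feeds into its $\norm{U_{n-k}}_2^2$ term. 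Summing over $i$ and $k$, fixing the multiplier's constant (proportional to $\sqrt{1-\beta_1}/R$, which balances the $\sqrt{k}$ weight of the $u$-term against the $\sqrt{k+1}$ weight of the $U$-term), bounding the drift and replacement-error terms pointwise while taking the relevant conditional expectation only on the aligned descent term, and finally taking the total expectation, assembles \eqref{app:eq:descent_lemma}.

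The main obstacle I expect is bookkeeping rather than any single hard inequality: aligning the triple summation over coordinates $i$, lags $k$, and intermediate steps $l$; choosing the multipliers in \eqref{eq:square_bound} so that exactly one quarter of each $G_{n-k,i}^2/\sqep{\tv_{n,k+1,i}}$ lands in the descent term; and carefully tracking the powers of $\beta_2$ (which produce the $\bbratio^k$ factors) and of $1-\beta_1$ (which produce the $\sqrt{1-\beta_1}$ and $1/\sqrt{1-\beta_1}$ normalizations, via $\sum_k\beta_1^k\le 1/(1-\beta_1)$) so that all the constants match. A secondary subtlety is that the re-centering now replaces the $k+1$ most recent gradients rather than a single one, so the per-step computation of Lemma~\ref{lemma:descent_lemma} must be redone with the geometric weights $\beta_2^{n-j}$ carried through.
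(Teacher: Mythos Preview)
Your proposal is correct and follows essentially the same route as the paper: unroll $m_{n,i}=\sum_k\beta_1^k g_{n-k,i}$, split $G_{n,i}=G_{n-k,i}+(G_{n,i}-G_{n-k,i})$ into an aligned term $A$ and a drift term $B$, handle $A$ by the $\kappa/\rho$ decomposition of Lemma~\ref{lemma:descent_lemma} redone with $\tv_{n,k+1,i}$ (introducing $\delta^2=\sum_{j=n-k}^n\beta_2^{n-j}g_{j,i}^2$ and $r^2=\esps{n-k-1}{\delta^2}$), and handle $B$ with \eqref{eq:square_bound} using $\lambda=\frac{\sqrt{1-\beta_1}}{2R\sqrt{k+1}}$ together with $\norm{G_n-G_{n-k}}_2^2\le \alpha_n^2 L^2 k\sum_{l=1}^k\norm{u_{n-l}}_2^2$ and $g_{n-k,i}^2/(\eps+v_{n,i})\le \beta_2^{-k}U_{n-k,i}^2$. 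The $A$ part contributes $2R/\sqrt{1-\beta_1}$ and the $B$ part $R/\sqrt{1-\beta_1}$ to the $\norm{U_{n-k}}_2^2$ coefficient, giving the $3R/\sqrt{1-\beta_1}$ in \eqref{app:eq:descent_lemma}; your phrasing ``one quarter of the descent term is produced'' is slightly off (each of $\kappa$ and $\rho$ costs one quarter, so one half survives), but the mechanism you describe is the right one.
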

\begin{proof}

We use multiple times~\eqref{eq:square_bound} in this proof, which we repeat here for convenience,
\begin{equation}
    \label{app:eq:square_bound}
    \forall \lambda >0,\, x, y \in \reel, xy \leq \frac{\lambda}{2}x^2 + \frac{y^2}{2 \lambda}.
\end{equation}

Let us take an iteration $n \in \nat^*$ for the duration of the proof. We have
\begin{align}
\nonumber
    \sum_{i \in [d]}G_{n, i}\frac{m_{n, i}}{\sqep{v_{n, i}}} &=
        \sum_{i \in [d]} \sum_{k=0}^{n-1} \beta_1^k G_{n, i} \frac{g_{n-k, i}}{\sqep{v_{n, i}}} 
       \\
       &= 
       \underbrace{\sum_{i \in [d]} \sum_{k=0}^{n-1}\beta_1^k G_{n-k, i} \frac{g_{n - k, i}}{\sqep{v_{n, i}}}}_A
      + \underbrace{\sum_{i \in [d]} \sum_{k=0}^{n-1}  \beta_1^k \left(G_{n, i} - G_{n -k, i}\right) \frac{g_{n - k, i}}{\sqep{v_{n, i}}}}_B,
      \label{app:eq:descent_big}
\end{align}

Let us now take an index $0 \leq k \leq n -1$. 
We show that the contribution of past gradients $G_{n -k}$ and $g_{n -k}$ due to the heavy-ball momentum can be controlled thanks to the decay term $\beta_1^k$.
Let us first have a look at $B$. Using \eqref{app:eq:square_bound} 
with
 \begin{equation*}
 \lambda = \frac{\sqrt{1 - \beta_1}}{2 R \sqrt{k + 1}}, \; 
    x = \abs{G_{n, i} - G_{n -k, i}}, \; 
    y = \frac{\abs{g_{n -k, i}}}{\sqep{v_{n,i}}},    
 \end{equation*} we have
\begin{align}
    \abs{B} &\leq \sum_{i \in [d]} \sum_{k=0}^{n-1} \beta_1^k \left(
        \frac{\sqrt{1 - \beta_1}}{4 R \sqrt{k + 1}}\left(G_{n, i} - G_{n -k, i}\right)^2 + \frac{R\sqrt{k+1}}{\sqrt{1-\beta_1}}\frac{g_{n -k, i}^2}{\eps + v_{n, i}}
    \right)
    \label{app:eq:B_bound_first}.
\end{align}
Notice first that for any dimension $i \in [d]$, $ \eps + v_{n, i} \geq \eps + \beta_2^{k} v_{n -k, i} \geq \beta_2^{k} (\eps + v_{n-k, i})$, so that
\begin{equation}
\label{app:eq:vbeta}
\frac{g^2_{n - k, i}}{\eps + v_{n,i}} \leq \frac{1}{\beta_2^k}U_{n -k, i}^2
 \end{equation}
Besides, using the L-smoothness of $F$ given by~\eqref{hyp:smooth}, we have
\begin{align}
\nonumber
    \norm{G_{n} - G_{n -k}}_2^2 &\leq L^2 \norm{x_{n - 1} - x_{n - k - 1}}_2^2\\
    \nonumber
    &= L^2 \norm{
        \sum_{l=1}^{k} \alpha_{n -l} u_{n - l}}_2^2\\
    &\leq \alpha_n^2 L^2 k \sum_{l=1}^{k} \norm{u_{n-l}}^2_2 
        \label{app:eq:delta_first},
\end{align}
using Jensen inequality and the fact that $\alpha_n$ is non-decreasing. Injecting \eqref{app:eq:vbeta} and \eqref{app:eq:delta_first} into \eqref{app:eq:B_bound_first}, we obtain
\begin{align}
\nonumber
    \abs{B} &\leq
        \left( \sum_{k=0}^{n-1} \frac{\alpha_n^2 L^2}{4 R}\sqrt{1 - \beta_1}\beta_1^k \sqrt{k} \sum_{l=1}^{k} \norm{u_{n-l}}^2_2\right)
        +
         \left(\sum_{k=0}^{n-1} \frac{R}{\sqrt{1-\beta_1}} \bbratio^k\sqrt{k+1}\norm{U_{n-k}}_2^2\right)\\
    &=
                \sqrt{1 - \beta_1}\frac{\alpha_n^2 L^2}{4 R}\left( \sum_{l=1}^{n-1}\norm{u_{n-l}}_2^2\sum_{k=l}^{n - 1}\beta_1^k \sqrt{k} \right)
         +\frac{R}{\sqrt{1 - \beta_1}} \left(\sum_{k=0}^{n-1} \bbratio^k \sqrt{k+1}\norm{U_{n-k}}_2^2\right)
    \label{app:eq:B_bound_second}.
\end{align}
Now going back to the $A$ term in~\eqref{app:eq:descent_big}, we will
study the main term of the summation, i.e. for $i \in [d]$ and $k < n$
\begin{align}
\esp{G_{n -k, i} \frac{g_{n-k,i}}{\sqep{v_{n, i}}}} = \esp{\nabla_i F(x_{n - k -1})\frac{\nabla_i f_{n -k}(x_{n - k - 1})}{\sqep{v_{n, i}}}}.
\label{app:eq:left_main_term}
\end{align}
Notice that we could almost apply Lemma~\ref{lemma:descent_lemma} to it, except that we have $v_{n, i}$ in the denominator instead of $v_{n - k, i}$. Thus we will need to extend the proof to decorrelate more terms. We will further drop indices in the rest of the proof, noting $G = G_{n - k, i}$,
$g = g_{n -k ,i }$, $\tv = \tv_{n, k + 1, i}$ and $v = v_{n, i}$. Finally, let us note
\begin{equation}
    \delta^2 = \sum_{j=n- k}^{n}\beta_2^{n-j}g_{j,i}^2 \qquad \text{and}\qquad
    r^2 = \esps{n - k - 1}{\delta^2}.
    \label{app:eq:holder_ref2}
\end{equation}
In particular we have $\tv - v = r^2 - \delta^2$. With our new notations, we can rewrite~\eqref{app:eq:left_main_term} as
\begin{align}
\nonumber
    \esp{G \frac{g}{\sqep{v}}} &= \esp{G \frac{g}{\sqep{\tv}} + Gg \left(\frac{1}{\sqep{v}} - \frac1{\sqep{\tv}}\right)} \\
    \nonumber
    &= \esp{\esps{n -k - 1}{G \frac{g}{\sqep{\tv}}} + G g \frac{r^2 - \delta^2}{\sqep{v}\sqep{\tv}(\sqep{v} + \sqep{\tv})}}\\
    &= \esp{\frac{G^2}{\sqep{\tv}}} + \esp{\underbrace{G g \frac{r^2 - \delta^2}{\sqep{v}\sqep{\tv}(\sqep{v} + \sqep{\tv})}}_{C}}.
    \label{app:eq:cool_plus_bad}
\end{align}
We first focus on $C$:
\begin{align*}
    \abs{C} &\leq \underbrace{\abs{G g}\frac{r^2}{\sqep{v} (\eps + \tv)}}_\kappa +
    \underbrace{\abs{G g}\frac{\delta^2}{(\eps + v)\sqep{\tv}}}_\rho,
\end{align*}
due to the fact that $\sqep{v} + \sqep{\tv} \geq \max(\sqep{v},
\sqep{\tv})$ and
$\abs{r^2 - \delta^2} \leq r^2 + \delta^2$.

Applying \eqref{app:eq:square_bound} to $\kappa$ with $$\lambda = \frac{\sqrt{1 - \beta_1}\sqep{\tv}}{2}, \; x =
\frac{\abs{G}}{\sqep{\tv}}, \; y = \frac{\abs{g}r^2}{\sqep{\tv}\sqep{v}},$$
we obtain
\begin{align*}
    \kappa \leq \frac{G^2}{4 \sqep{\tv}} + \frac1{\sqrt{1-\beta_1}}\frac{g^2 r^4}{(\eps + \tv)^{3/2}(\eps + v)}.
\end{align*}
Given that $\eps + \tv \geq r^2$ and taking the conditional expectation, we can simplify as
\begin{align}
\label{app:eq:bound_kappa}
    \esps{n- k - 1}{\kappa} \leq \frac{G^2}{4 \sqep{\tv}} + \frac1{\sqrt{1-\beta_1}}\frac{r^2}{\sqep{\tv}}\esps{n -k - 1}{\frac{g^2}{\eps
    + v}}.
\end{align}

Now turning to $\rho$, we use \eqref{app:eq:square_bound} with $$\lambda = \frac{\sqrt{1- \beta_1}\sqep{\tv}}{2
r^2},
\; x = \frac{\abs{G\delta}}{\sqep{\tv}}, \; y = \frac{\abs{\delta g}}{\eps + v},$$
we obtain
\begin{align}
\label{app:eq:possible_divide_zero}
    \rho \leq \frac{G^2}{4 \sqep{\tv}}\frac{\delta^2}{r^2} +
    \frac1{\sqrt{1-\beta_1}}\frac{r^2}{\sqep{\tv}}\frac{g^2 \delta^2}{(\eps + v)^2}.
\end{align}
Given that $\eps + v \geq \delta^2$, and $\esps{n-k-1}{\frac{\delta^2}{r^2}} = 1$, we obtain after taking the conditional expectation,
\begin{align}
\label{app:eq:bound_rho}
    \esps{n - k -1}{\rho} \leq \frac{G^2}{4 \sqep{\tv}} + \frac1{\sqrt{1-\beta_1}}\frac{r^2}{\sqep{\tv}}\esps{n-k-1}{\frac{g^2}{\eps +
    v}}.
\end{align}
Notice that in~\ref{app:eq:possible_divide_zero}, we possibly divide by zero. It suffice to notice that if $r^2 = 0$ then $\delta^2 = 0$ a.s. so that $\rho = 0$ and \eqref{app:eq:bound_rho} is still verified.
Summing \eqref{app:eq:bound_kappa} and \eqref{app:eq:bound_rho}, we get
\begin{align}
\label{app:eq:descent_proof:pre_last}
    \esps{n -k - 1}{\abs{C}} \leq \frac{G^2}{2 \sqep{\tv}} + \frac2{\sqrt{1-\beta_1}}\frac{r^2}{\sqep{\tv}} \esps{n -k -1}{\frac{g^2}{\eps + v}}.
\end{align}
Given that $r \leq \sqep{\tv}$ by definition of $\tv$, and that using \eqref{hyp:r_bound}, $r \leq \sqrt{k + 1} R$, we have\footnote{Note that we do not need the almost
sure bound on the gradient, and a bound on $\esp{\norm{\nabla f(x)}_\infty^2}$ would be sufficient.}, reintroducing the indices we had dropped
\begin{align}
    \esps{n -k - 1}{\abs{C}} \leq \frac{G_{n - k, i}^2}{2 \sqep{\tv_{n, k + 1, i}}} + \frac{2 R}{\sqrt{1 - \beta_1}} \sqrt{k+1} \esps{n-k-1}{\frac{g_{n -k, i}^2}{\eps + v_{n, i}}}.
    \label{app:eq:holder_ref_lemma}
\end{align}
Taking the complete expectation and using that by definition $\eps + v_{n, i} \geq \eps + \beta_2^k v_{n-k, i} \geq \beta_2^k (\eps + v_{n - k, i})$ we get
\begin{align}
\label{app:eq:descent_proof:last}
    \esp{\abs{C}} \leq \frac{1}{2}\esp{\frac{G_{n - k, i}^2}{\sqep{\tv_{n, k+1, i}}}} + \frac{2 R}{\sqrt{1 - \beta_1}\beta_2^k} \sqrt{k+1} \esp{\frac{g_{n -k, i}^2}{\eps + v_{n - k, i}}}.
\end{align}
Injecting \eqref{app:eq:descent_proof:last} into \eqref{app:eq:cool_plus_bad} gives us 
\begin{align}
\nonumber
    \esp{A} &\geq \sum_{i\in [d]}\sum_{k=0}^{n-1}\beta_1^k \left(\esp{\frac{G_{n - k, i}^2}{\sqep{\tv_{n, k+1, i}}}} - \left(
     \frac{1}{2}\esp{\frac{G_{n - k, i}^2}{\sqep{\tv_{n, k, i}}}} + \frac{2 R}{\sqrt{1 - \beta_1}\beta_2^k} \sqrt{k+1} \esp{\frac{g_{n -k, i}^2}{\eps + v_{n - k, i}}}
    \right)\right)\\
    &= \frac{1}{2}\left(\sum_{i\in [d]} \sum_{k=0}^{n-1}\beta_1^k\esp{\frac{G_{n - k, i}^2}{\sqep{\tv_{n, k+1, i}}}}\right) -
    \frac{2 R}{\sqrt{1 - \beta_1}}\left(\sum_{i\in [d]} \sum_{k=0}^{n-1}
     \bbratio^k \sqrt{k+1} \esp{\norm{U_{n -k}}_2^2}\right)
    \label{app:eq:descent_proof:almost_there}.
\end{align}
Injecting \eqref{app:eq:descent_proof:almost_there} and \eqref{app:eq:B_bound_second} into \eqref{app:eq:descent_big} finishes the proof.

\end{proof}

Similarly, we will need an updated version of~\ref{lemma:sum_ratio}.
\begin{lemma}[sum of ratios of the square of a decayed sum and a decayed sum of square]
\label{app:lemma:sum_ratio}
We assume we have $0<\beta_2 \leq 1$ and $0 < \beta_1 < \beta_2$, and a sequence of real numbers $(a_n)_{n\in \nat^*}$. We define
$b_n \deq \sum_{j=1}^n \beta_2^{n - j} a_j^2$ and $c_n \deq \sum_{j=1}^n \beta_1^{n-j} a_j$.
Then we have
\begin{equation}
    \label{app:eq:sum_ratio}
    \sum_{j=1}^n \frac{c_j^2}{\epsilon + b_j} \leq
    \frac{1}{(1 - \beta_1)(1 - \beta_1 / \beta_2)} \left(\ln\left(1 + \frac{b_n}{\epsilon}\right) - n\ln(\beta_2)\right).
\end{equation}
\end{lemma}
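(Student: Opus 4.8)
The plan is to reduce this statement to Lemma~\ref{lemma:sum_ratio}, applied to the non-negative sequence $(a_n^2)_{n\in\nat^*}$, whose $\beta_2$-weighted partial sums are exactly the $b_n$. The bridge between the two lemmas is to show that each term $c_j^2/(\epsilon+b_j)$ is controlled by a $\beta_1/\beta_2$-decayed combination of the ``diagonal'' terms $a_k^2/(\epsilon+b_k)$ for $k\le j$, after which a double summation and a geometric-series bound produce the constant $\tfrac{1}{(1-\beta_1)(1-\beta_1/\beta_2)}$.

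First I would apply Cauchy--Schwarz to $c_j=\sum_{k=1}^j \beta_1^{j-k}a_k$: since $\sum_{k=1}^j \beta_1^{j-k}\le \tfrac{1}{1-\beta_1}$, this gives $c_j^2\le \tfrac{1}{1-\beta_1}\sum_{k=1}^j \beta_1^{j-k}a_k^2$. Next, for $k\le j$ I would use $b_j=\beta_2^{j-k}b_k+\sum_{l=k+1}^j \beta_2^{j-l}a_l^2\ge \beta_2^{j-k}b_k$ together with $\beta_2\le 1$ to get $\epsilon+b_j\ge \beta_2^{j-k}(\epsilon+b_k)$; dividing the Cauchy--Schwarz bound by $\epsilon+b_j$ term by term and using $\beta_1<\beta_2$ then yields
\[ \frac{c_j^2}{\epsilon+b_j}\le \frac{1}{1-\beta_1}\sum_{k=1}^j \bbratio^{j-k}\frac{a_k^2}{\epsilon+b_k}. \]
Summing over $j\in\{1,\dots,n\}$, swapping the order of summation, and bounding the inner geometric sum $\sum_{j=k}^n \bbratio^{j-k}\le \tfrac{1}{1-\beta_1/\beta_2}$ gives
\[ \sum_{j=1}^n \frac{c_j^2}{\epsilon+b_j}\le \frac{1}{(1-\beta_1)(1-\beta_1/\beta_2)}\sum_{k=1}^n \frac{a_k^2}{\epsilon+b_k}. \]
Finally, since $b_n=\sum_{j=1}^n \beta_2^{n-j}a_j^2$ is precisely the $\beta_2$-weighted sum of the non-negative numerators $a_j^2$, Lemma~\ref{lemma:sum_ratio} bounds $\sum_{k=1}^n a_k^2/(\epsilon+b_k)$ by $\ln(1+b_n/\epsilon)-n\ln(\beta_2)$, and substituting concludes the proof.

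I do not expect a deep obstacle here: the argument is essentially bookkeeping. The one place to be careful is the denominator comparison, which uses simultaneously $\beta_2\le 1$ (to carry the $\epsilon$ through the factor $\beta_2^{j-k}$) and the strict inequality $\beta_1<\beta_2$ (so that $\beta_1/\beta_2<1$ and the geometric sums over $j$ converge with the stated constant). Once that step is in place the logarithmic factor and the $-n\ln(\beta_2)$ term come for free from Lemma~\ref{lemma:sum_ratio}, and no regularity beyond what is assumed is needed.
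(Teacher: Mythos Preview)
Your proposal is correct and follows essentially the same route as the paper: the paper also bounds $c_j^2$ by $\tfrac{1}{1-\beta_1}\sum_{l\le j}\beta_1^{j-l}a_l^2$ (it calls this Jensen rather than Cauchy--Schwarz, but it is the same inequality), then uses $\epsilon+b_j\ge \beta_2^{j-l}(\epsilon+b_l)$, swaps the double sum, bounds the geometric series in $\beta_1/\beta_2$, and finishes with Lemma~\ref{lemma:sum_ratio}. Your remark about needing both $\beta_2\le 1$ (for the $\epsilon$ in the denominator comparison) and $\beta_1<\beta_2$ (for the geometric sum) is exactly the point.
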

\begin{proof}

Now let us take $j \in \nat^*$, $j \leq n$, we have using Jensen inequality
\begin{align*}
    c_{j}^2 \leq \frac{1}{1 - \beta_1}\sum_{l=1}^j \beta_1^{j - l}a_l^2,
\end{align*}
so that
\begin{align}
\nonumber
    \frac{c_{j}^2}{\eps + b_{j}} &\leq \frac{1}{1 - \beta_1}\sum_{l=1}^j \beta_1^{j - l} \frac{a_l^2}{\eps + b_j}.
\end{align}
Given that for $l \in [j]$, we have by definition $\eps + b_{j} \geq \eps + \beta_2^{j-l} b_l \geq \beta_2^{j-l} (\eps + b_l)$, we get
\begin{align}
    \frac{c_{j}^2}{\eps + b_{j}} &\leq \frac{1}{1 - \beta_1}\sum_{l=1}^j \bbratio^{j-l}\frac{a_l^2}{\eps + b_l}
    \label{app:eq:onestep_ratio}.
\end{align}
Thus, when summing over all $j \in [n]$, we get
\begin{align}
\nonumber
    \sum_{j=1}^n \frac{c_{j}^2}{\eps + b_{j}} &\leq  \frac{1}{1 - \beta_1}\sum_{j=1}^n\sum_{l=1}^{j}\bbratio^{j -l} \frac{a_{l}^2}{\eps + b_{l}}\\
\nonumber
    &=  \frac{1}{1 - \beta_1}\sum_{l=1}^n\frac{a_l^2}{\eps + b_l}\sum_{j=l}^{n}\bbratio^{j -l}\\
    &\leq \frac{1}{(1 - \beta_1)(1 - \beta_1 /\beta_2)}\sum_{l=1}^n\frac{a_l^2}{\eps + b_l}
    \label{app:eq:sum_ratio_last}.
\end{align}
Applying Lemma~\ref{lemma:sum_ratio}, we obtain \eqref{app:eq:sum_ratio}.

\end{proof}

We also need two technical lemmas on the sum of series.
\begin{lemma}[sum of a geometric term times a square root]
\label{app:lemma:sum_geom_sqrt}
Given $0 <  a < 1$ and $Q \in \nat$, we have,
\begin{equation}
    \label{app:eq:sum_geom_sqrt}
    \sum_{q = 0}^{Q - 1} a^q \sqrt{q + 1} \leq \frac{1}{1 - a} \left( 1 + \frac{\sqrt{\pi}}{2 \sqrt{-\ln(a)}}\right)
    \leq \frac{2}{(1 - a)^{3/2}}.
\end{equation}
\end{lemma}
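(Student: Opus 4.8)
The plan is to convert this weighted sum into an ordinary geometric series by telescoping the square root, and then to control the leftover tail by comparison with a Gamma integral. First I would write $\sqrt{q+1} = \sum_{j=0}^{q}(\sqrt{j+1}-\sqrt{j})$ and exchange the order of summation over the triangle $0 \le j \le q \le Q-1$, which gives
\[
\sum_{q=0}^{Q-1} a^q \sqrt{q+1} = \sum_{j=0}^{Q-1}(\sqrt{j+1}-\sqrt{j})\sum_{q=j}^{Q-1} a^q \le \frac{1}{1-a}\sum_{j=0}^{Q-1}(\sqrt{j+1}-\sqrt{j})\,a^j.
\]
Then, using $\sqrt{j+1}-\sqrt{j} = 1/(\sqrt{j+1}+\sqrt{j})$, which equals $1$ for $j=0$ and is bounded by $1/(2\sqrt{j})$ for $j \ge 1$, this is at most $\frac{1}{1-a}\bigl(1 + \tfrac12\sum_{j\ge 1} a^j/\sqrt{j}\bigr)$.

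Next I would bound the remaining series by an integral. Setting $c = -\ln(a) > 0$, the function $x \mapsto a^x x^{-1/2} = e^{-cx} x^{-1/2}$ is decreasing on $(0,\infty)$, so $\sum_{j\ge1} a^j/\sqrt{j} \le \int_0^\infty e^{-cx} x^{-1/2}\,dx$; substituting $u = cx$ turns this into $c^{-1/2}\int_0^\infty u^{-1/2}e^{-u}\,du = \Gamma(1/2)/\sqrt{c} = \sqrt{\pi/(-\ln a)}$. Plugging this back gives exactly the first claimed inequality $\sum_{q=0}^{Q-1} a^q \sqrt{q+1} \le \frac{1}{1-a}\bigl(1 + \tfrac{\sqrt{\pi}}{2\sqrt{-\ln a}}\bigr)$.

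For the second inequality, I would invoke the elementary estimate $-\ln(a) \ge 1-a$ valid for $a \in (0,1)$, so that $1/\sqrt{-\ln a} \le 1/\sqrt{1-a}$, and also $1 \le 1/\sqrt{1-a}$ since $1-a \le 1$; hence $1 + \tfrac{\sqrt\pi}{2\sqrt{-\ln a}} \le (1 + \sqrt\pi/2)/\sqrt{1-a} \le 2/\sqrt{1-a}$ because $1 + \sqrt\pi/2 < 2$. Dividing through by $1-a$ yields the bound $2/(1-a)^{3/2}$. The only step requiring a little care is the integral comparison $\sum_{j\ge1}f(j) \le \int_0^\infty f$, which needs $f(x) = a^x x^{-1/2}$ to be monotone decreasing; this is immediate as a product of two positive decreasing factors, so I anticipate no real obstacle — the lemma is essentially a packaging of standard estimates.
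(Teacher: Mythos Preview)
Your proof is correct and follows essentially the same route as the paper: both are Abel-summation arguments that reduce $\sum a^q\sqrt{q+1}$ to $\tfrac{1}{1-a}$ times a sum of increments $\sqrt{j+1}-\sqrt{j}$, bound these by $\tfrac{1}{2\sqrt{j}}$, and compare with the integral $\int_0^\infty a^x x^{-1/2}\,dx$ (the paper evaluates it via a Gaussian substitution, you via $\Gamma(1/2)$, which is the same value). Your treatment of the second inequality, making explicit that $1+\sqrt{\pi}/2<2$, is in fact a bit more careful than the paper's one-line justification.
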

\begin{proof}
We first need to study the following integral:
\begin{align}
\nonumber
    \int_0^{\infty} \frac{a^{x}}{2 \sqrt{x}}\dd{}x &= 
        \int_0^{\infty} \frac{\ee^{\ln(a) x}}{2 \sqrt{x}}\dd{}x\quad,\text{ then introducing $y = \sqrt{x}$,}\\
        \nonumber
        & = \int_0^{\infty}\ee^{\ln(a) y^2}\dd{}y\quad,\text{ then introducing $u = \sqrt{-2 \ln(a)} y$,}\\
        \nonumber
        &= \frac{1}{\sqrt{-2\ln(a)}}\int_0^{\infty} \ee^{-u^2 / 2}\dd{}u\\
      \int_0^{\infty} \frac{a^{x}}{2 \sqrt{x}}\dd{}x  &= \frac{\sqrt{\pi}}{2 \sqrt{-\ln(a)}},
      \label{app:eq:int_gaussian}
\end{align}
where we used the classical integral of the standard Gaussian density function.

Let us now introduce $A_Q$:
\begin{align*}
    A_Q = \sum_{q = 0}^{Q - 1} a^q \sqrt{q + 1},
\end{align*}
then we have
\begin{align*}
    A_Q - a A_Q &= \sum_{q = 0}^{Q - 1} a^q \sqrt{q+1} - \sum_{q = 1}^Q a^{q} \sqrt{q} \quad,\text{ then using the concavity of $\sqrt{\cdot}$,}\\
                &\leq 1 - a^Q \sqrt{Q} + \sum_{q = 1}^{Q-  1} \frac{a^{q}}{2 \sqrt{q}}\\
                &\leq 1 + \int_0^{\infty} \frac{a^{x}}{2 \sqrt{x}}\dd{}x \\ 
    (1 - a) A_Q &\leq 1 + \frac{\sqrt{\pi}}{2 \sqrt{-\ln(a)}},
\end{align*}
where we used \eqref{app:eq:int_gaussian}. Given that $\sqrt{-\ln(a)} \geq \sqrt{1 - a}$ we obtain \eqref{app:eq:sum_geom_sqrt}.
\end{proof}

\begin{lemma}[sum of a geometric term times roughly a power 3/2]
\label{app:lemma:sum_geom_32}
Given $0 <  a < 1$ and $Q \in \nat$, we have,
\begin{equation}
    \label{app:eq:sum_geom_32}
    \sum_{q = 0}^{Q - 1} a^q \sqrt{q} (q + 1)
    \leq \frac{4 a}{(1 - a)^{5/2}}.
\end{equation}
\end{lemma}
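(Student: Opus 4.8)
The plan is to mimic the telescoping argument of Lemma~\ref{app:lemma:sum_geom_sqrt} and reduce the desired estimate to that lemma. Set $A_Q = \sum_{q=0}^{Q-1} a^q \sqrt{q}\,(q+1)$ and consider $(1-a)A_Q = A_Q - aA_Q$. Shifting the index in $aA_Q = \sum_{q=0}^{Q-1} a^{q+1}\sqrt{q}\,(q+1) = \sum_{q=1}^{Q} a^{q}\sqrt{q-1}\,q$, the $q=0$ term of $A_Q$ vanishes (since $\sqrt{0}=0$), the terminal term $-a^Q\sqrt{Q-1}\,Q$ is nonpositive and can be dropped, and one is left with $(1-a)A_Q \le \sum_{q=1}^{Q-1} a^{q}\bigl(\sqrt{q}\,(q+1) - q\sqrt{q-1}\bigr)$.

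The next step is to bound the bracketed term. Writing $\sqrt{q}\,(q+1) - q\sqrt{q-1} = \sqrt{q} + q\bigl(\sqrt{q} - \sqrt{q-1}\bigr)$ and using the conjugate identity $\sqrt{q} - \sqrt{q-1} = \tfrac{1}{\sqrt{q}+\sqrt{q-1}} \le \tfrac{1}{\sqrt{q}}$, one gets $\sqrt{q}\,(q+1) - q\sqrt{q-1} \le 2\sqrt{q}$. Hence $(1-a)A_Q \le 2\sum_{q=1}^{Q-1} a^{q}\sqrt{q} = 2a\sum_{j=0}^{Q-2} a^{j}\sqrt{j+1}$.

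Finally I invoke Lemma~\ref{app:lemma:sum_geom_sqrt}, whose second inequality gives $\sum_{j=0}^{Q-2} a^{j}\sqrt{j+1} \le \tfrac{2}{(1-a)^{3/2}}$, so that $(1-a)A_Q \le \tfrac{4a}{(1-a)^{3/2}}$, and dividing by $1-a$ yields \eqref{app:eq:sum_geom_32}. The cases $Q\in\{0,1\}$ are trivial since then $A_Q = 0$. The only mildly delicate point is the conjugate-trick estimate on $\sqrt{q}\,(q+1) - q\sqrt{q-1}$; the rest is a routine index shift followed by an appeal to the previous lemma.
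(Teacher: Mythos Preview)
Your proof is correct and essentially identical to the paper's: the same telescoping $(1-a)A_Q$, the same reduction to $2\sum_{q\ge 1} a^q\sqrt{q}$, and the same appeal to Lemma~\ref{app:lemma:sum_geom_sqrt}. The only cosmetic difference is the justification of $\sqrt{q}(q+1) - q\sqrt{q-1} \le 2\sqrt{q}$: the paper factors out $\sqrt{q}$ and uses $\sqrt{q}\sqrt{q-1} \ge q-1$, whereas you split off $\sqrt{q}$ and use the conjugate bound $\sqrt{q}-\sqrt{q-1} \le 1/\sqrt{q}$, arriving at the same inequality.
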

\begin{proof}
Let us introduce $A_Q$:
\begin{align*}
    A_Q = \sum_{q = 0}^{Q - 1} a^q \sqrt{q} (q + 1),
\end{align*}
then we have
\begin{align*}
    A_Q - a A_Q &= \sum_{q = 0}^{Q - 1} a^q \sqrt{q} (q + 1) - \sum_{q = 1}^Q a^{q} \sqrt{q - 1} q\\
                &\leq \sum_{q = 1}^{Q - 1} a^q \sqrt{q} \left( (q + 1) - \sqrt{q} \sqrt{q - 1}\right)\\
                &\leq \sum_{q = 1}^{Q - 1} a^q \sqrt{q} \left( (q + 1) - (q - 1)\right)\\
                &\leq 2 \sum_{q = 1}^{Q - 1} a^q \sqrt{q}\\
                &= 2 a \sum_{q = 0}^{Q - 2} a^q \sqrt{q + 1} \quad, \text{then using Lemma~\ref{app:lemma:sum_geom_sqrt}},\\
      (1 - a) A_Q  &\leq \frac{4 a}{(1 - a)^{3/2}}.
\end{align*}
\end{proof}

\subsection{Proof of Adam and Adagrad with momentum}
\label{app:sec:proofs}
\paragraph{Common part of the proof}
Let us a take an iteration $n\in \nat^*$.
Using the smoothness of $F$ defined in \eqref{hyp:smooth}, we have
\begin{align*}
    &F(x_n)  \leq F(x_{n-1}) - \alpha_n G_n^T u_n
%   \\&\quad\quad+ 
  + \frac{\alpha_n^2 L}{2}\norm{u_n}^2_2.
\end{align*}
Taking the full expectation and using Lemma~\ref{app:lemma:descent_lemma},
\begin{align}
\nonumber
    &\esp{F(x_n)}  \leq \esp{F(x_{n-1})} -
    \frac{\alpha_n}{2} \left(\sum_{i\in[d]} \sum_{k = 0}^{n-1}
  \beta_1^k \esp{
        \frac{G_{n - k, i}^2}{2 \sqrt{\eps + \tv_{n, k + 1, i}}}}\right)
  + \frac{\alpha_n^2 L}{2}\esp{\norm{u_n}^2_2}
\\
   &\quad 
    +
            \frac{\alpha_n^3 L^2}{4 R}\sqrt{1 - \beta_1}\left( \sum_{l=1}^{n-1}\norm{u_{n-l}}_2^2\sum_{k=l}^{n - 1}\beta_1^k \sqrt{k} \right)
        +
         \frac{3\alpha_n R}{\sqrt{1 - \beta_1}} \left(\sum_{k=0}^{n-1} \bbratio^k \sqrt{k+1}\norm{U_{n-k}}_2^2\right).
 \label{app:eq:common_first}
\end{align}

Notice that because of the bound on the $\ell_\infty$ norm of the stochastic gradients at the iterates \eqref{hyp:r_bound}, we have
for any $k \in \nat$, $k < n$, and any coordinate $i\in [d]$,
$\sqep{\tv_{n, k+1, i}} \leq R \sqrt{\sum_{j=0}^{n-1} \beta_2^j}$.
Introducing $\Omega_n = \sqrt{\sum_{j=0}^{n-1} \beta_2^j}$,
we have
\begin{align}
\nonumber
    &\esp{F(x_n)}  \leq \esp{F(x_{n-1})} -
        \frac{\alpha_n}{2 R \Omega_n}\sum_{k = 0}^{n-1} \beta_1^k \esp{\norm{G_{n -k}}_2^2}
       + \frac{\alpha_n^2 L}{2}\esp{\norm{u_n}^2_2}
\\
   &\quad
    +
            \frac{\alpha_n^3 L^2}{4 R}\sqrt{1 - \beta_1}\left( \sum_{l=1}^{n-1}\norm{u_{n-l}}_2^2\sum_{k=l}^{n - 1}\beta_1^k \sqrt{k} \right)
        +
         \frac{3\alpha_n R}{\sqrt{1 - \beta_1}} \left(\sum_{k=0}^{n-1} \bbratio^k \sqrt{k+1}\norm{U_{n-k}}_2^2\right).
 \label{app:eq:common_second}
\end{align}

Now summing over all iterations $n \in [N]$ for $N \in \nat^*$, and using that for both Adam \eqref{app:eq:alpha_adam} 
and Adagrad \eqref{app:eq:alpha_adagrad}, $\alpha_n$ is non-decreasing, as well the fact that $F$ is bounded below by $F_*$ from \eqref{hyp:bounded_below}, we get
\begin{align}
\nonumber
    &\underbrace{\frac{1}{2 R} \sum_{n=1}^N \frac{\alpha_n}{\Omega_n}\sum_{k=0}^{n-1} \beta_1^k \esp{\norm{G_{n -k}}_2^2}}_{A} \leq
    F(x_0) - F_*  + \underbrace{\frac{\alpha_N^2 L}{2}\sum_{n=1}^N\esp{\norm{u_n}^2_2}}_B\\
    &\quad 
    +\underbrace{
    \frac{\alpha_N^3 L^2}{4 R}\sqrt{1-\beta_1}  \sum_{n=1}^N \sum_{l=1}^{n-1}
    \esp{\norm{u_{n-l}}^2_2}\sum_{k=l}^{n-1} \beta_1^k \sqrt{k}}_C
    + \underbrace{\frac{3 \alpha_N R}{\sqrt{1- \beta_1}} \sum_{n=1}^N \sum_{k=0}^{n-1}
    \bbratio^k \sqrt{k+1} \esp{\norm{U_{n-k}}^2_2}}_D
    \label{app:eq:common_big}.
\end{align}

First looking at $B$, we have using Lemma~\ref{app:lemma:sum_ratio},
\begin{align}
    B \leq \frac{\alpha_N^2 L}{2 (1 - \beta_1)(1 - \beta_1 / \beta_2)}\sum_{i\in[d]}\left( \ln\left(1 + \frac{v_{N, i}}{\eps}\right) - N \log(\beta_2)\right).
    \label{app:eq:common_b_bound}
\end{align}
Then looking at $C$ and introducing the change of index $j=n -l$,
\begin{align}
\nonumber
    C &= \frac{\alpha_N^3 L^2}{4 R}\sqrt{1-\beta_1}  \sum_{n=1}^N \sum_{j=1}^{n}
    \esp{\norm{u_{j}}^2_2}\sum_{k=n - j}^{n-1} \beta_1^k \sqrt{k} \\
\nonumber
    &= \frac{\alpha_N^3 L^2}{4 R}\sqrt{1-\beta_1}  \sum_{j=1}^N \esp{\norm{u_{j}}^2_2}\sum_{n=j}^{N}
    \sum_{k=n - j}^{n-1} \beta_1^k \sqrt{k}\\
\nonumber
    &= \frac{\alpha_N^3 L^2}{4 R}\sqrt{1-\beta_1}  \sum_{j=1}^N \esp{\norm{u_{j}}^2_2}
    \sum_{k=0}^{N-1} \beta_1^k \sqrt{k}\sum_{n=j}^{j + k} 1\\
\nonumber
    &= \frac{\alpha_N^3 L^2}{4 R}\sqrt{1-\beta_1} \sum_{j=1}^N \esp{\norm{u_{j}}^2_2}
    \sum_{k=0}^{N-1} \beta_1^k \sqrt{k} (k + 1)\\
    &\leq \frac{\alpha_N^3 L^2}{R}  \sum_{j=1}^N \esp{\norm{u_{j}}^2_2} \frac{\beta_1}{(1 - \beta_1)^{2}},
\end{align}
using Lemma~\ref{app:lemma:sum_geom_32}. Finally, using Lemma~\ref{app:lemma:sum_ratio}, we get
\begin{equation}
    C \leq  \frac{\alpha_N^3 L^2 \beta_1}{R (1 - \beta_1)^{3}(1 - \beta_1/\beta_2)}\sum_{i\in[d]} \left( \ln\left(1 + \frac{v_{N, i}}{\eps}\right) - N \log(\beta_2)\right).
    \label{app:eq:common_c_bound}
\end{equation}

Finally, introducing the same change of index $j = n-k$ for $D$, we get
\begin{align}
\nonumber
    D &= \frac{3 \alpha_N R}{\sqrt{1-\beta_1}} \sum_{n=1}^N \sum_{j=1}^n \bbratio^{n -j } \sqrt{1 + n - j} \esp{\norm{U_j}_2^2}\\
\nonumber
        &= \frac{3 \alpha_N R}{\sqrt{1-\beta_1}} \sum_{j=1}^N \esp{\norm{U_j}_2^2} \sum_{n=j}^N \bbratio^{n -j } \sqrt{1 + n - j}\\
        &\leq\frac{6 \alpha_N R}{\sqrt{1-\beta_1}}\sum_{j=1}^N \esp{\norm{U_j}_2^2} \frac{1}{(1 - \beta_1 / \beta_2)^{3/2}}, 
\end{align}
using Lemma~\ref{app:lemma:sum_geom_sqrt}. Finally, using Lemma~\ref{lemma:sum_ratio} or equivalently Lemma~\ref{app:lemma:sum_ratio} with $\beta_1 = 0$, we get 
\begin{equation}
    D \leq  \frac{6 \alpha_N R }{\sqrt{1 - \beta_1}(1 - \beta_1 / \beta_2)^{3/2}} \sum_{i\in[d]}\left( \ln\left(1 + \frac{v_{N, i}}{\eps}\right) - N \ln(\beta_2)\right).
    \label{app:eq:common_d_bound}
\end{equation}

This is as far as we can get without having to use the specific form of $\alpha_N$ given by either \eqref{app:eq:alpha_adam} for Adam or \eqref{app:eq:alpha_adagrad} for Adagrad. We will now split the proof for either algorithm.

\paragraph{Adam}
For Adam, using \eqref{app:eq:alpha_adam}, we have $\alpha_n = (1 - \beta_1) \Omega_n \alpha$. Thus, we can simplify the $A$ term from \eqref{app:eq:common_big}, also using the usual change of index $j = n - k$, to get
\begin{align}
\nonumber
    A &= \frac{1}{2 R} \sum_{n=1}^N \frac{\alpha_n}{\Omega_n}\sum_{j=1}^{n} \beta_1^{n -j} \esp{\norm{G_{j}}_2^2}\\
\nonumber
    &= \frac{\alpha(1 - \beta_1)}{2 R} \sum_{j=1}^N \esp{\norm{G_{j}}_2^2} \sum_{n=j}^{N} \beta_1^{n - j} \\
\nonumber
    &= \frac{\alpha}{2 R} \sum_{j=1}^N (1 - \beta_1^{N - j + 1}) \esp{\norm{G_{j}}_2^2}\\
\nonumber
    &= \frac{\alpha}{2 R} \sum_{j=1}^{N} (1 - \beta_1^{N - j +1}) \esp{\norm{\nabla F(x_{j-1})}_2^2}\\
    &= \frac{\alpha}{2 R} \sum_{j=0}^{N-1} (1 - \beta_1^{N - j}) \esp{\norm{\nabla F(x_{j})}_2^2}.
\end{align}

If we now introduce $\tau$ as in \eqref{app:eq:def_tau}, we can first notice that
\begin{align}
\label{app:eq:bound_sum_prob}
 \sum_{j=0}^{N-1} (1 - \beta_1^{N - j}) = N - \beta_1 \frac{1 - \beta_1^{N}}{1 - \beta_1} \geq N - \frac{\beta_1}{1 - \beta_1}.
\end{align}
Introducing
\begin{equation}
\label{app:eq:deftn}
    \tN = N - \frac{\beta_1}{1 - \beta_1},
\end{equation}
we then have
\begin{align}
\label{app:eq:bound_a_adam}
    A \geq \frac{\alpha \tN}{2 R}\esp{\norm{\nabla F(x_{\tau})}_2^2}.
\end{align}
Further notice that for any coordinate $i\in[d]$, we have $v_{N, i} \leq \frac{R^2}{1 - \beta_2}$, besides $\alpha_N \leq \alpha \frac{1- \beta_1}{\sqrt{1 - \beta_2}}$, so that putting together \eqref{app:eq:common_big}, \eqref{app:eq:bound_a_adam}, \eqref{app:eq:common_b_bound}, \eqref{app:eq:common_c_bound} and \eqref{app:eq:common_d_bound} we get
\begin{align}
     \esp{\norm{\nabla F(x_{\tau})}_2^2} &\leq
    2 R \frac{F_0 - F_*}{\alpha \tN}  + \frac{E}{\tN} \left(\ln\left(1 + \frac{R^2}{\eps ( 1- \beta_2)}\right) - N \log(\beta_2)\right),
\end{align}
with
\begin{equation}
    E = \frac{\alpha d R L(1- \beta_1)}{(1 - \beta_1 / \beta_2)(1 - \beta_2)} + 
     \frac{2 \alpha^2 d L^2 \beta_1}{(1 - \beta_1 / \beta_2) (1 - \beta_2)^{3/2}}+
     \frac{12 d R^2 \sqrt{1-\beta_1}}{(1 - \beta_1 / \beta_2)^{3/2}\sqrt{1 - \beta_2}}.
\end{equation}
This conclude the proof of theorem \ref{theo:adam_momentum}.

\paragraph{Adagrad}

For Adagrad, we have $\alpha_n = (1 - \beta_1) \alpha$, $\beta_2 = 1$ and $\Omega_n \leq \sqrt{N}$ so that,
\begin{align}
\nonumber
    A &= \frac{1}{2 R} \sum_{n=1}^N \frac{\alpha_n}{\Omega_n}\sum_{j=1}^{n} \beta_1^{n -j} \esp{\norm{G_{j}}_2^2}\\
\nonumber
    &\geq \frac{\alpha(1 - \beta_1)}{2 R \sqrt{N}} \sum_{j=1}^N \esp{\norm{G_{j}}_2^2} \sum_{n=j}^{N} \beta_1^{n - j} \\
\nonumber
    &= \frac{\alpha}{2 R \sqrt{N}} \sum_{j=1}^N (1 - \beta_1^{N - j + 1}) \esp{\norm{G_{j}}_2^2}\\
\nonumber
    &= \frac{\alpha}{2 R \sqrt{N}} \sum_{j=1}^{N} (1 - \beta_1^{N - j +1}) \esp{\norm{\nabla F(x_{j-1})}_2^2}\\
    &= \frac{\alpha}{2 R\sqrt{N}} \sum_{j=0}^{N-1} (1 - \beta_1^{N - j}) \esp{\norm{\nabla F(x_{j})}_2^2}.
    \label{app:eq:adagrad_tau_appear}
\end{align}

Reusing \eqref{app:eq:bound_sum_prob} and \eqref{app:eq:deftn} from the Adam proof, and introducing $\tau$ as in \eqref{eq:def_tau}, we immediately have
\begin{align}
\label{app:eq:bound_a_adagrad}
    A \geq \frac{\alpha \tN}{2 R \sqrt{N}}\esp{\norm{\nabla F(x_{\tau})}_2^2}.
\end{align}
Further notice that for any coordinate $i\in[d]$, we have $v_N \leq N R^2$, besides $\alpha_N = (1 -\beta_1) \alpha$, so that putting together \eqref{app:eq:common_big}, \eqref{app:eq:bound_a_adagrad}, \eqref{app:eq:common_b_bound}, \eqref{app:eq:common_c_bound} and \eqref{app:eq:common_d_bound}
with $\beta_2 = 1$, we get
\begin{align}
     \esp{\norm{\nabla F(x_{\tau})}_2^2} &\leq
    2 R \sqrt{N} \frac{F_0 - F_*}{\alpha \tN}  + \frac{\sqrt{N}}{\tN} E \ln\left(1 + \frac{N R^2}{\eps}\right),
\end{align}
with
\begin{equation}
    E = \alpha d R L + 
     \frac{2 \alpha^2 d L^2 \beta_1}{1 - \beta_1} + 
     \frac{12 d R^2}{1 - \beta_1}.
\end{equation}
This conclude the proof of theorem \ref{theo:adagrad_momentum}.

\subsection{Proof variant using Hölder inequality}
\label{app:sec:holder}
Following \citep{ward_adagrad,adagrad_proof_vector}, it is possible to get rid of the almost sure bound on the gradient given by \eqref{hyp:r_bound}, and replace
it with a bound in expectation, i.e.
\begin{equation}
    \label{app:hyp:esp_bound}
    \forall x \in \reel^d, \, \esp{\norm{\nabla f(x)}_2^2} \leq \tilde{R} - \sqrt{\epsilon}.
\end{equation}
Note that we now need an $\ell_2$ bound in order to properly apply the Hölder inequality hereafter:

We do not provide the full proof for the result, but point the reader to the few places where we have used \eqref{hyp:r_bound}.
We first use it in Lemma~\ref{app:lemma:descent_lemma}. We inject R into \eqref{app:eq:B_bound_first}, which we can just replace with $\tilde{R}$.
Then we use \eqref{hyp:r_bound} to bound $r$ and derive \eqref{app:eq:holder_ref_lemma}.
Remember that $r$ is defined in \eqref{app:eq:holder_ref2}, and is actually
a weighted sum of the squared gradients in expectation. Thus,
a bound in expectation is acceptable, and Lemma~\ref{app:lemma:descent_lemma} is valid replacing the assumption \eqref{hyp:r_bound} with \eqref{app:hyp:esp_bound}.

Looking at the actual proof, we use \eqref{hyp:bounded_below} in a single place: just after \eqref{app:eq:common_first}, in order to derive an upper bound for the denominator in the following term:
\begin{equation}
    M = \frac{\alpha_n}{2} \left(\sum_{i\in[d]} \sum_{k = 0}^{n-1}
  \beta_1^k \esp{
        \frac{G_{n - k, i}^2}{2 \sqrt{\eps + \tv_{n, k + 1, i}}}}\right).
\end{equation}
Let us introduce $\tilde{V}_{n, k + 1} = \sum_{i\in[d]}  \tv_{n, k + 1, i}$.
We immediately have that
\begin{equation}
    M \geq  \frac{\alpha_n}{2} \left(\sum_{k = 0}^{n-1}
  \beta_1^k \esp{
        \frac{\norm{G_{n - k}}_2^2}{2 \sqrt{\eps + \tilde{V}_{n, k + 1}}}}\right)
\end{equation}
Taking $X = \left(\frac{\norm{G_{n-k}}_2^2}{\sqrt{\eps + \tilde{V}_{n, k + 1}}}\right)^{\frac23}$, 
$Y=\left(\sqrt{\eps + \tilde{V}_{n, k + 1}}\right)^{\frac23}$, we can apply Hölder inquality as
\begin{equation}
    \esp{\abs{X}^{\frac32}} \geq \left(\frac{\esp{\abs{X Y}}}{\esp{\abs{Y}^3}^{\frac13}}\right)^{\frac32},
\end{equation}
which gives us
\begin{equation}
    \esp{\frac{\norm{G_{n - k}}_2^2}{\sqrt{\eps + \tilde{V}_{n, k + 1}}}}
    \geq \frac{\esp{\norm{G_{n - k}}_2^\frac43}^\frac32}{\sqrt{\esp{\eps +\tilde{V}_{n, k + 1}}}} \geq \frac{\esp{\norm{G_{n - k}}_2^\frac43}^\frac32}{\Omega_n \tilde{R}},
\end{equation}
with $\Omega_n = \sqrt{\sum_{j=0}^{n-1} \beta_2^j}$, and using the fact
that $\esp{\epsilon + \sum_{i\in[d]} \tv_{n, k + 1, i}} \leq R^2 \Omega_n^2$.

Thus we can recover almost exactly
\eqref{app:eq:common_second} except we have to replace all terms of the form $\esp{\norm{G_{n-k}}_2^2}$ with $\esp{\norm{G_{n - k}}_2^\frac43}^\frac32$.
The rest of the proof follows as before, with all the dependencies in $\alpha$, $\beta_1$, $\beta_2$ remaining the same.

\section{Non convex SGD with heavy-ball momentum}
\label{app:sec:sgd}

We extend the existing proof of convergence for SGD in the non convex setting to use heavy-ball momentum~\citep{random_sgd}.
Compared with previous work on momentum for non convex SGD by\citet{yang2016unified}, we improve the dependency
in $\beta_1$ from $O((1 - \beta_1)^{-2})$ to $O((1 - \beta_1)^{-1})$. A recent work by \citet{liu_momentum} achieve a similar dependency of $O(1 / (1 - \beta_1))$, with weaker assumptions (without the bounded gradients assumptions). 

\subsection{Assumptions}
\label{app:sec:sgd_hyps}

We reuse the notations from Section~\ref{sec:notations}.
Note however that we use here different assumptions than in Section~\ref{sec:problem}.
 We first assume $F$ is bounded below by $F_*$, that is,
\begin{equation}
    \label{app:hyp:bounded_below_sgd}
    \forall x\in\reel^d, \  F(x) \geq F_*.
\end{equation}
We then assume that the stochastic gradients
have bounded variance, and that the gradients of $F$ are uniformly bounded, i.e. there exist $R$ and $\sigma$ so that
\begin{equation}
    \label{app:hyp:grad_bounded}
     \forall x \in \reel^d, \norm{\nabla F(x)}_2^2 \leq R^2 \quad\text{and}\quad \esp{\norm{\nabla f(x)}_2^2} - \norm{\nabla F(x)}_2^2 \leq \sigma^2,
\end{equation}
and finally, 
the \emph{smoothness of the objective function}, e.g., its gradient is $L$-Liptchitz-continuous with respect to the $\ell_2$-norm: 
\begin{equation}
    \label{app:hyp:smooth}
      \forall x, y\in \reel^d, \norm{\nabla F(x) - \nabla F(y)}_2 \leq L \norm{x - y}_2.
\end{equation}

\subsection{Result}
\label{app:sec:sgd_result}

Let us take a step size $\alpha > 0$ and 
a heavy-ball parameter $1 > \beta_1 \geq 0$. Given $x_0 \in \reel^d$, taking $m_0 = 0$, we define for any iteration $n \in \nat^*$ the iterates of SGD with momentum as,
\begin{equation}
    \label{app:eq:sgd_def}
    \begin{cases}
    m_n &= \beta_1 m_{n-1} + \nabla f_n(x_{n-1})\\
    x_n &= x_{n-1} - \alpha m_n.
    \end{cases}
\end{equation}

\newcommand{\ta}{\tilde{\alpha}}
\newcommand{\amax}{\alpha_\mathrm{max}}

Note that in \eqref{app:eq:sgd_def}, the scale of the typical size of $m_n$ will increases with $\beta_1$.
% In particular, the maximum step size $\amax$
% given by \eqref{app:eq:amax} in the following Theorem will decrease as $\beta_1$ increases.
% For an easier comparison across multiple values of $\beta_1$, we
% define $\ta = (1 - \beta_1) \alpha$.  
For any total number of iterations $N \in \nat^*$, we define $\tau_N$ a random index with value in $\{0, \ldots, N - 1\}$, verifying
\begin{align}
\label{app:eq:def_tau_sgd}
    \forall j\in \nat, j < N, \proba{\tau = j} \propto 1 - \beta_1^{N - j}.
\end{align}

\begin{theorem}[Convergence of SGD with momemtum]
\label{app:theo:sgd}
Given the assumptions from Section~\ref{app:sec:sgd_hyps},
    given $\tau$ as defined in \eqref{app:eq:def_tau_sgd}
    for a total number of iterations $N > \frac{1}{1-\beta_1}$,
    $x_0 \in \reel^d$, $\alpha > 0$, $1 > \beta_1 \geq 0$, and
    $(x_n)_{n\in\nat^*}$ given by \eqref{app:eq:sgd_def},
    \begin{equation}
    \label{app:eq:sgd_theo}
    \esp{\norm{\nabla F(x_\tau)}_2^2} \leq
    \frac{1 - \beta_1}{\alpha \tilde{N}} (F(x_0) - F_*)
+ \frac{N}{\tilde{N}} \frac{\alpha L (1 + \beta_1) (R^2 + \sigma^2)}{2 (1 - \beta_1)^2},
    \end{equation}
    with $\tilde{N} = N - \frac{\beta_1}{1 - \beta_1}$.
\end{theorem}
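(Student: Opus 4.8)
The plan is to mimic the structure of the momentum proof for Adam/Adagrad but in the much simpler SGD setting, where the step $\alpha$ is deterministic and decoupled from the gradient estimate, so no ``descent lemma'' gymnastics are needed. First I would unroll the momentum recursion: writing $m_n = \sum_{k=0}^{n-1}\beta_1^k g_{n-k}$ with $g_j = \nabla f_j(x_{j-1})$, the update is $x_n = x_{n-1} - \alpha m_n$. Then I would apply the $L$-smoothness inequality \eqref{app:hyp:smooth} in its descent-lemma form, $F(x_n) \le F(x_{n-1}) - \alpha \nabla F(x_{n-1})^T m_n + \tfrac{\alpha^2 L}{2}\norm{m_n}_2^2$, and take the full expectation. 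The cross term expands as $\sum_{k=0}^{n-1}\beta_1^k\,\esp{G_n^T g_{n-k}}$; for $k=0$ this gives $\esp{\norm{G_n}_2^2}$ exactly (using $\espn{g_n}=G_n$), and for $k\ge 1$ I would re-center $G_n = G_{n-k} + (G_n - G_{n-k})$, so that $\esp{G_{n-k}^T g_{n-k}} = \esp{\norm{G_{n-k}}_2^2}$ and the remainder $\esp{(G_n-G_{n-k})^T g_{n-k}}$ is controlled via Young's inequality \eqref{eq:square_bound} together with $\norm{G_n - G_{n-k}}_2 \le L\norm{x_{n-1}-x_{n-k-1}}_2 = \alpha L\norm{\sum_{l=1}^k m_{n-l}}_2$.

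Next I would sum over $n \in [N]$ and telescope the $F$ terms, using $F \ge F_*$. The main positive term becomes $\alpha \sum_{n=1}^N \sum_{k=0}^{n-1}\beta_1^k \esp{\norm{G_{n-k}}_2^2}$; swapping the order of summation and setting $j = n-k$ turns this into $\alpha \sum_{j=0}^{N-1}(1-\beta_1^{N-j})\esp{\norm{\nabla F(x_j)}_2^2}$, which by the definition of $\tau$ in \eqref{app:eq:def_tau_sgd} and the bound $\sum_{j=0}^{N-1}(1-\beta_1^{N-j}) \ge \tilde N$ from \eqref{app:eq:bound_sum_prob} is at least $\alpha\tilde N\,\esp{\norm{\nabla F(x_\tau)}_2^2}$. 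For the error terms I would bound $\esp{\norm{m_n}_2^2}$ and $\esp{\norm{g_j}_2^2}$ uniformly: since $\espn{\norm{g_n}_2^2} = \norm{G_n}_2^2 + (\text{variance}) \le R^2 + \sigma^2$ by \eqref{app:hyp:grad_bounded}, and by Jensen $\norm{m_n}_2^2 \le \frac{1}{1-\beta_1}\sum_{k=0}^{n-1}\beta_1^k\norm{g_{n-k}}_2^2$, we get $\esp{\norm{m_n}_2^2}\le \frac{R^2+\sigma^2}{(1-\beta_1)^2}$. The geometric-series lemmas (Lemmas~\ref{app:lemma:sum_geom_sqrt} and \ref{app:lemma:sum_geom_32}, now with $\beta_1$ in place of $\beta_1/\beta_2$) handle the double sums $\sum_k \beta_1^k\sqrt{k}(k+1)$ coming from the re-centering remainder, collapsing everything to a constant times $N(R^2+\sigma^2)$.

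Collecting terms, dividing by $\alpha\tilde N$, and carefully tracking the $(1-\beta_1)$ powers should yield exactly \eqref{app:eq:sgd_theo}, with the $\tfrac{\alpha L(1+\beta_1)(R^2+\sigma^2)}{2(1-\beta_1)^2}$ error term absorbing the $\tfrac{\alpha^2 L}{2}\norm{m_n}_2^2$ contribution (the source of the $\tfrac{1}{(1-\beta_1)^2}$) plus the re-centering remainder. The main obstacle is the bookkeeping in the re-centering step: one must choose the Young's-inequality weight $\lambda$ so that the $\norm{G_n-G_{n-k}}_2^2$ piece, after substituting $\alpha^2 L^2 k \sum_{l=1}^k\norm{m_{n-l}}_2^2$, produces a double sum that, when reindexed by $j=n-l$, telescopes into $\sum_j \esp{\norm{m_j}_2^2}\sum_k \beta_1^k\sqrt{k}(k+1)$ with a prefactor small enough that the whole remainder is of lower order in $(1-\beta_1)$ than the $1/(1-\beta_1)^2$ main error — otherwise one loses the improvement over \citet{yang2016unified}. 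The delicate point is that this remainder scales like $\alpha^3 L^2$ times a $(1-\beta_1)^{-5/2}$-type factor from Lemma~\ref{app:lemma:sum_geom_32}, and one needs either a mild step-size restriction or a direct comparison with the dominant $\alpha L/(1-\beta_1)^2$ term (using $\alpha L \lesssim 1$ implicitly, or folding constants) to keep the stated clean form; getting the exact constants $(1+\beta_1)$ and $2$ in the denominator to match requires care but no new ideas.
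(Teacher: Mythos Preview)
Your overall strategy matches the paper's exactly: unroll $m_n$, apply smoothness, re-center $G_n \to G_{n-k}$ in the cross term, bound the remainder via Young plus $\norm{G_n-G_{n-k}}_2^2\le \alpha^2 L^2 k\sum_{l=1}^k\norm{m_{n-l}}_2^2$, use the uniform bound $\esp{\norm{m_n}_2^2}\le (R^2+\sigma^2)/(1-\beta_1)^2$, and finally swap sums to make the weights $1-\beta_1^{N-j}$ appear and identify $\tau$. That part is fine.

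The gap is precisely the point you yourself flag as ``delicate''. You propose to reuse the Adam-style weight in Young's inequality (a $\lambda\propto 1/\sqrt{k}$ choice) and then invoke Lemmas~\ref{app:lemma:sum_geom_sqrt} and~\ref{app:lemma:sum_geom_32} on the resulting $\sum_k\beta_1^k\sqrt{k}(k+1)$. That route genuinely produces a remainder of order $\alpha^3 L^2 (R^2+\sigma^2)/(1-\beta_1)^{?}$, which is \emph{not} in the statement~\eqref{app:eq:sgd_theo}; you cannot fold it into the $\alpha L$ term without an extra step-size restriction that the theorem does not assume. So as written, your plan does not deliver the stated bound.

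The paper avoids this by a different, simpler choice of the Young weight: $\lambda = \dfrac{1-\beta_1}{k\,\alpha L}$, i.e.\ it puts $\alpha L$ \emph{inside} $\lambda$ and uses $k$ rather than $\sqrt{k}$. With this choice the two pieces of the remainder become, after taking expectations and using $\esp{\norm{m_{n-l}}_2^2}\le (R^2+\sigma^2)/(1-\beta_1)^2$ and $\esp{\norm{g_{n-k}}_2^2}\le R^2+\sigma^2$, exactly equal and both proportional to $\alpha L\,k\,(R^2+\sigma^2)/(1-\beta_1)$. The relevant geometric sum is then the elementary $\sum_{k\ge 1}\beta_1^k k\le \beta_1/(1-\beta_1)^2$ (this is the paper's Lemma~\ref{app:lemma:sum_geom_index}, not Lemmas~\ref{app:lemma:sum_geom_sqrt}/\ref{app:lemma:sum_geom_32}), yielding a per-iteration remainder $\alpha L\beta_1(R^2+\sigma^2)/(1-\beta_1)^3$. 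Multiplying by $\alpha$ from the descent step and adding the smoothness term $\tfrac{\alpha^2 L}{2}\esp{\norm{m_n}_2^2}\le \tfrac{\alpha^2 L(R^2+\sigma^2)}{2(1-\beta_1)^2}$ gives exactly
\[
\frac{\alpha^2 L\beta_1(R^2+\sigma^2)}{(1-\beta_1)^3}+\frac{\alpha^2 L(R^2+\sigma^2)}{2(1-\beta_1)^2}
=\frac{\alpha^2 L(1+\beta_1)(R^2+\sigma^2)}{2(1-\beta_1)^3},
\]
which after summing over $n$ and dividing by $\alpha\tilde N/(1-\beta_1)$ is precisely the second term of~\eqref{app:eq:sgd_theo}, with no $\alpha^3 L^2$ term and no step-size restriction. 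So the fix is local: change your Young weight to $\lambda=(1-\beta_1)/(k\alpha L)$ and replace the square-root lemmas by the elementary $\sum_k \beta_1^k k$ bound; the rest of your outline then goes through verbatim.
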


\subsection{Analysis}
We can first simplify \eqref{app:eq:sgd_theo}, if we assume $N \gg \frac{1}{1 - \beta_1}$,
which is always the case for practical values of $N$ and $\beta_1$, so that $\tilde{N} \approx N$, and,
    \begin{equation}
    \label{app:eq:sgd_theo_simp}
    \esp{\norm{\nabla F(x_\tau)}_2^2} \leq
    \frac{1 - \beta_1}{\alpha N} (F(x_0) - F_*)
+ \frac{\alpha L (1 + \beta_1) (R^2 + \sigma^2)}{2 (1 - \beta_1)^2}.
    \end{equation}
    
It is possible to achieve a rate of convergence of the form $O(1/\sqrt{N})$, by taking
for any $C > 0$,
\begin{equation}
\label{app:eq:alpha_sgd}
    \alpha = (1 - \beta_1) \frac{C}{\sqrt{N}},
\end{equation}  which gives us
    \begin{align}
    \label{app:eq:sgd_theo_simp_conv}
    &\esp{\norm{\nabla F(x_\tau)}_2^2} \leq
    \frac{1}{C \sqrt{N}} (F(x_0) - F_*)
+ \frac{C}{\sqrt{N}}\frac{L (1 + \beta_1) (R^2 + \sigma^2)}{2 (1 - \beta_1)}.
    \end{align}

In comparison, Theorem 3 by \citet{yang2016unified} would give us, assuming now that     $\alpha = (1 - \beta_1) \min\left\{\frac{1}{L}, \frac{C}{\sqrt{N}}\right\}$,
\begin{align}
\label{app:eq:sgd_theo_simp_conv_yang}
    &\min_{k\in\{0, \ldots N-1\}}\esp{\norm{\nabla F(x_k)}_2^2} \leq
    \frac{2}{N} (F(x_0) - F_*)
    \max\left\{2 L, \frac{\sqrt{N}}{C}\right\}
    \breakin
+ \frac{C}{\sqrt{N}}\frac{L}{(1 - \beta_1)^2}\left(
\beta_1^2 (R^2 +\sigma^2) + (1 - \beta_1)^2 \sigma^2
\right).  
\end{align}
We observe an overall dependency in $\beta_1$ of the form $O((1-\beta_1)^{-2})$
for Theorem 3 by \citet{yang2016unified} , which we improve to
$O((1-\beta_1)^{-1})$ with our proof.

\citet{liu_momentum} achieves a similar dependency in $(1 - \beta_1)$ as here, but with weaker assumptions. Indeed, in their Theorem 1, their result contains a term in $O(1/\alpha)$ with $\alpha \leq (1 - \beta_1) M$ for some problem dependent constant $M$ that does not depend on $\beta_1$.

Notice that as the typical size of the update $m_n$ will increase with $\beta_1$,
by a factor $1 / (1 - \beta_1)$, it is convenient to scale down $\alpha$
by the same factor, as we did with \eqref{app:eq:alpha_sgd} (without loss of generality, as $C$ can take any value).
Taking $\alpha$ of this form has the advantage of keeping the first term on the right hand side in \eqref{app:eq:sgd_theo} independent of $\beta_1$, allowing us to focus only on the second term.

\subsection{Proof}
\label{app:sec:sgd_proof}
For all $n \in \nat^*$, we note $G_n = \nabla F(x_{n-1})$ and
$g_n = \nabla f(x_{n-1})$.
$\espn{\cdot}$ is the conditional expectation with respect to $f_1, \ldots, f_{n-1}$. In particular, $x_{n-1}$ and $m_{n-1}$ are deterministic knowing $f_1, \ldots, f_{n-1}$.

\newcommand{\var}[1]{\mathbb{V}\left[#1\right]}
\newcommand{\varn}[1]{\mathbb{V}_{n-1}\left[#1\right]}
\newcommand{\vars}[2]{\mathbb{V}_{#1}\left[#2\right]}

\begin{lemma}[Bound on $m_n$]
\label{app:lemma:bound_mn}
Given $\alpha > 0$, $1 > \beta_1 \geq 0$, and $(x_n)$ and $(m_n)$
defined as by~\ref{app:eq:sgd_def}, under the assumptions from Section~\ref{app:sec:sgd_hyps}, we have for all $n \in \nat^*$,
\begin{equation}
\label{app:eq:bound_mn}
    \esp{\norm{m_n}_2^2} \leq \frac{R^2 + \sigma^2}{(1 - \beta_1)^2} .
\end{equation}
\end{lemma}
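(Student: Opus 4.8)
The plan is to unroll the momentum recursion and apply Jensen's inequality, deliberately avoiding any attempt to exploit near-orthogonality of the increments: the stochastic gradients $g_n$ are not centered, only have bounded variance around $\nabla F(x_{n-1})$, so cross terms do not vanish and a martingale-type expansion of $\esp{\norm{m_n}_2^2}$ is the wrong tool. Since $m_0 = 0$, iterating \eqref{app:eq:sgd_def} gives $m_n = \sum_{k=0}^{n-1}\beta_1^k g_{n-k}$, where $g_j = \nabla f_j(x_{j-1})$.

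First I would record the pointwise second-moment bound on a single stochastic gradient. For any $j \in \nat^*$, conditioning on $f_1,\dots,f_{j-1}$ makes $x_{j-1}$ deterministic, so by \eqref{app:hyp:grad_bounded} we get $\esps{j-1}{\norm{g_j}_2^2} \leq \norm{\nabla F(x_{j-1})}_2^2 + \sigma^2 \leq R^2 + \sigma^2$ almost surely; taking the full expectation, $\esp{\norm{g_j}_2^2} \leq R^2 + \sigma^2$ for every $j$.

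Next, let $S = \sum_{k=0}^{n-1}\beta_1^k \leq \frac{1}{1-\beta_1}$. Writing $m_n = S\sum_{k=0}^{n-1}\frac{\beta_1^k}{S}\, g_{n-k}$ as a convex combination and using convexity of $\norm{\cdot}_2^2$ (Jensen), we obtain $\norm{m_n}_2^2 \leq S\sum_{k=0}^{n-1}\beta_1^k \norm{g_{n-k}}_2^2 \leq \frac{1}{1-\beta_1}\sum_{k=0}^{n-1}\beta_1^k\norm{g_{n-k}}_2^2$. Taking expectations and inserting the per-gradient bound from the previous step yields $\esp{\norm{m_n}_2^2}\leq \frac{1}{1-\beta_1}\sum_{k=0}^{n-1}\beta_1^k(R^2+\sigma^2)\leq \frac{R^2+\sigma^2}{(1-\beta_1)^2}$, which is \eqref{app:eq:bound_mn}.

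There is no genuine obstacle in this lemma; the only thing to get right is the choice of argument. The natural reflex of expanding $\norm{\beta_1 m_{n-1}+g_n}_2^2$ and hoping the inner product term is negligible fails, because $\espn{g_n} = \nabla F(x_{n-1})$ is not zero, so the clean route is the Jensen bound on the geometric-sum representation above, which needs only the uniform gradient bound and the variance bound from \eqref{app:hyp:grad_bounded}.
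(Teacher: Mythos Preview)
Your proof is correct and follows essentially the same route as the paper: unroll $m_n=\sum_{k=0}^{n-1}\beta_1^k g_{n-k}$, apply Jensen to the weighted sum to get the factor $\sum_k \beta_1^k$, bound each $\esp{\norm{g_{n-k}}_2^2}$ by $R^2+\sigma^2$, and sum the geometric series. The only cosmetic difference is that you spell out the conditioning step for $\esp{\norm{g_j}_2^2}\le R^2+\sigma^2$, which the paper leaves implicit.
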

\begin{proof}
Let us take an iteration $n \in \nat^*$,
\begin{align*}
    \esp{\norm{m_n}_2^2} &=
    \esp{\norm{\sum_{k=0}^{n-1} \beta_1^k g_{n-k}}_2^2}\quad\text{using Jensen we get,}\\
    &\leq \left(\sum_{k=0}^{n-1} \beta_1^k \right)\sum_{k=0}^{n-1}\beta_1^k 
        \esp{\norm{g_{n-k}}^2_2}\\
    &\leq  \frac{1}{1 - \beta_1} \sum_{k=0}^{n-1}\beta_1^k (R^2 + \sigma^2)\\
    &= \frac{R^2 + \sigma^2}{(1 - \beta_1)^2}.
\end{align*}
\end{proof}

\begin{lemma}[sum of a geometric term times index]
\label{app:lemma:sum_geom_index}
Given $0 <  a < 1$, $i \in \nat$ and $Q \in \nat$ with $Q \geq i$,
\begin{equation}
    \label{app:eq:sum_geom_index}
    \sum_{q = i}^{Q} a^q q = \frac{a^i}{1 - a} \left(i - a^{Q -i + 1} Q + \frac{a - a^{Q + 1 -  i}}{1 - a}\right)
    \leq \frac{a}{(1 - a)^2}.
\end{equation}
\end{lemma}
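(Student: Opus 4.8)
The plan is to establish the closed-form identity by the classical ``multiply by $1-a$ and telescope'' manipulation, and then obtain the bound as an easy corollary. Set $S = \sum_{q=i}^{Q} a^q q$. Forming $S - aS$, the term $aS$ can be rewritten as $\sum_{q=i+1}^{Q+1} a^q (q-1)$ after shifting the summation index; subtracting, the interior terms with $i+1 \le q \le Q$ each contribute $a^q\bigl(q-(q-1)\bigr) = a^q$, while the two endpoints contribute $a^i i$ (from $q=i$) and $-a^{Q+1}Q$ (from $q=Q+1$). Hence $(1-a)S = a^i i + \sum_{q=i+1}^{Q} a^q - a^{Q+1} Q$. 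Evaluating the residual geometric sum $\sum_{q=i+1}^{Q} a^q = \frac{a^{i+1}-a^{Q+1}}{1-a}$, dividing through by $1-a$, and pulling the common factor $\frac{a^i}{1-a}$ out of all three resulting terms gives precisely $S = \frac{a^i}{1-a}\bigl(i - a^{Q-i+1}Q + \frac{a - a^{Q+1-i}}{1-a}\bigr)$, which is the claimed equality. The degenerate case $i=Q$, where the interior sum is empty, is consistent with this formula.

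For the inequality I would argue directly rather than from the formula: since $0 < a < 1$, $i \ge 0$, and every summand $a^q q$ is nonnegative, $S$ is at most the full infinite series $\sum_{q=0}^{\infty} a^q q$, which equals $\frac{a}{(1-a)^2}$ by differentiating the geometric series $\sum_{q\ge 0} a^q = (1-a)^{-1}$ and multiplying by $a$. Alternatively, one can start from the closed form and discard the two manifestly nonpositive pieces $-a^{Q-i+1}Q$ and $-a^{Q+1-i}$ to get $S \le \frac{a^i}{1-a}\bigl(i + \frac{a}{1-a}\bigr)$, after which one would still need the elementary estimate $a^i\bigl(i(1-a)+a\bigr) \le a$ for all integers $i \ge 0$; the comparison with the infinite series is shorter and sidesteps this.

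There is no real difficulty in this lemma. The only points requiring a little care are the index bookkeeping in the $S - aS$ step --- correctly isolating the $q=i$ and $q=Q+1$ boundary terms and checking the empty-interior-sum case --- and performing the factorization of $\frac{a^i}{1-a}$ uniformly across the three terms so that the exponents $Q-i+1$ and $Q+1-i$ match the statement. The bound itself is immediate from nonnegativity of the terms.
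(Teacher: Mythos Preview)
Your proof is correct and essentially identical to the paper's: both compute $(1-a)S$ by telescoping to obtain $a^i i - a^{Q+1}Q + \frac{a^{i+1}-a^{Q+1}}{1-a}$, and both derive the bound by extending to the full series $\sum_{q\ge 0} a^q q$ (the paper phrases this as ``taking $i=0$ and $Q\to\infty$''). Your write-up is in fact a bit more explicit about the index bookkeeping and the value of the infinite series than the paper's.
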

\begin{proof}
    Let $A_i = \sum_{q = i}^{Q} a^q q$, we have
    \begin{align*}
        A_i - a A_i &= a^i i - a^{Q+1} Q + \sum_{q= i + 1}^Q a^q \left(i+1 - i\right)\\
        (1 -  a) A_i  &= a^i i - a^{Q+1} Q + \frac{a^{i+1} - a^{Q+1}}{1 - a}.
    \end{align*}
Finally, taking $i = 0$ and $Q \rightarrow \infty$ gives us the upper bound.
\end{proof}

\begin{lemma}[Descent lemma]
\label{app:lemma:descent_sgd}
Given $\alpha > 0$, $1 > \beta_1 \geq 0$, and $(x_n)$ and $(m_n)$
defined as by~\ref{app:eq:sgd_def}, under the assumptions from Section~\ref{app:sec:sgd_hyps}, we have for all $n \in \nat^*$,
\begin{equation}
\label{app:eq:descent_sgd}
    \esp{\nabla F(x_{n-1})^T m_{n}} \geq \sum_{k=0}^{n-1} \beta_1^{k} \esp{\norm{\nabla F(x_{n - k - 1})}_2^2} - \frac{\alpha L \beta_1 (R^2 + \sigma^2)}{(1 - \beta_1)^3}
\end{equation}
\end{lemma}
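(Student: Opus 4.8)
The plan is to expand $m_n$ as a geometric sum of past stochastic gradients and exploit the fact that, conditioned on $f_1,\ldots,f_{n-k-1}$, the gradient $g_{n-k}$ is an unbiased estimate of $G_{n-k}=\nabla F(x_{n-k-1})$. Writing $m_n = \sum_{k=0}^{n-1}\beta_1^k g_{n-k}$, we get
\begin{align*}
\esp{\nabla F(x_{n-1})^T m_n} = \sum_{k=0}^{n-1}\beta_1^k \esp{G_n^T g_{n-k}}.
\end{align*}
For each $k$ I would split $G_n = G_{n-k} + (G_n - G_{n-k})$. The first piece gives, after conditioning on $f_1,\ldots,f_{n-k-1}$ and using $\esps{n-k-1}{g_{n-k}} = G_{n-k}$, exactly $\esp{\norm{G_{n-k}}_2^2}$, which is the leading term in \eqref{app:eq:descent_sgd}. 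The second piece is the error term that must be controlled.

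To bound the cross term $\sum_{k}\beta_1^k\esp{(G_n - G_{n-k})^T g_{n-k}}$, I would first apply Cauchy--Schwarz (or Young's inequality; but here a crude bound suffices) together with $L$-smoothness: $\norm{G_n - G_{n-k}}_2 \le L\norm{x_{n-1} - x_{n-k-1}}_2 = L\alpha\norm{\sum_{l=1}^{k} m_{n-l}}_2 \le L\alpha\sum_{l=1}^k\norm{m_{n-l}}_2$. Then I would bound $\esp{\norm{m_{n-l}}_2\norm{g_{n-k}}_2}$ using the moment bounds: Lemma~\ref{app:lemma:bound_mn} gives $\esp{\norm{m_{n-l}}_2^2}\le (R^2+\sigma^2)/(1-\beta_1)^2$, while $\esp{\norm{g_{n-k}}_2^2}\le R^2+\sigma^2$ from \eqref{app:hyp:grad_bounded}. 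Combining via Cauchy--Schwarz in expectation yields a per-term bound proportional to $\alpha L (R^2+\sigma^2)/(1-\beta_1)$, carrying a factor $\beta_1^k$ and a sum over $l$ from $1$ to $k$, i.e.\ an extra factor $k$.

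The remaining work is to sum $\sum_{k=0}^{n-1}\beta_1^k k$, for which Lemma~\ref{app:lemma:sum_geom_index} gives the clean bound $\beta_1/(1-\beta_1)^2$. Multiplying this by the $\alpha L (R^2+\sigma^2)/(1-\beta_1)$ factor produces the claimed error term $\alpha L\beta_1(R^2+\sigma^2)/(1-\beta_1)^3$, and subtracting it from the leading sum gives \eqref{app:eq:descent_sgd}. I expect the only delicate point to be the bookkeeping in the cross term: one must be careful that $x_{n-1} - x_{n-k-1}$ telescopes correctly to $-\alpha\sum_{l=1}^{k} m_{n-l}$, and that the double sum over $k$ and $l$ (after using $\sum_{l=1}^k 1 = k$) is handled without losing the $\beta_1$ factor that makes the bound vanish as $\beta_1\to 0$. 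Everything else is a routine application of Jensen/Cauchy--Schwarz and the two auxiliary lemmas.
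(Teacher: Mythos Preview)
Your proposal is correct and follows essentially the same route as the paper: expand $m_n$, recenter $G_n$ to $G_{n-k}$ to extract the $\esp{\norm{G_{n-k}}_2^2}$ terms, control the drift via smoothness and the telescoping $x_{n-1}-x_{n-k-1}=-\alpha\sum_{l=1}^k m_{n-l}$, bound the moments using Lemma~\ref{app:lemma:bound_mn} and \eqref{app:hyp:grad_bounded}, and sum $\sum_k \beta_1^k k$ with Lemma~\ref{app:lemma:sum_geom_index}. The only cosmetic difference is that the paper uses Young's inequality with a tuned $\lambda=\tfrac{1-\beta_1}{k\alpha L}$ on the cross term, whereas you use Cauchy--Schwarz twice (on the inner product and then in expectation); both yield the identical error $\alpha L k(R^2+\sigma^2)/(1-\beta_1)$ per $k$ and hence the same final bound.
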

\begin{proof}
For simplicity, we note $G_n = \nabla F(x_{n-1})$ the expected gradient and $g_n = \nabla f_n(x_{n-1})$ the stochastisc gradient at iteration $n$.
\begin{align}
\nonumber
G_n^T m_n &= 
           \sum_{k=0}^{n-1} \beta_1^{k} G_n^T g_{n-k}\\
          &= \sum_{k=0}^{n-1} \beta_1^{k} G_{n-k}^T g_{n-k}
           + \sum_{k=1}^{n-1} \beta_1^{k} (G_n - G_{n-k})^T g_{n-k}.
           \label{eq:s_gcenter}
\end{align}
This last step is the main difference with previous proofs with momentum~\citep{yang2016unified}: we replace
the current gradient with an old gradient in order to obtain extra terms of the form $\norm{G_{n-k}}_2^2$. The price to pay is the second term on the right hand side but we will
see that it is still beneficial to perform this step.
Notice that as $F$ is $L$-smooth so that we have, for all $k\in \nat^*$
\begin{align}
\nonumber
   \norm{G_n - G_{n-k}}_2^2 &\leq L^2 \norm{\sum_{l=1}^{k} \alpha m_{n-l}}^2\\
                        &\leq \alpha^2 L^2 k \sum_{l=1}^k \norm{m_{n-l}}_2^2,
                        \label{eq:s_deltag}
\end{align}
using Jensen inequality.
We apply
\begin{equation}
    \forall \lambda >0,\, x, y \in \reel, \norm{xy}_2 \leq \frac{\lambda}{2}\norm{x}_2^2 + \frac{\norm{y}_2^2}{2 \lambda},
\end{equation}
with $x=G_n - G_{n-k}$, $y=g_{n-k}$ and $\displaystyle \lambda=\frac{1 - \beta_1}{k \alpha L}$ to the second term in \eqref{eq:s_gcenter}, and use \eqref{eq:s_deltag} to get
\begin{align*}
G_n^T m_n &\geq \sum_{k=0}^{n-1} \beta_1^k G_{n-k}^T g_{n-k}
        - \sum_{k=1}^{n-1} \frac{\beta_1^k}{2} \left(\left(
    (1 - \beta_1)\alpha L \sum_{l=1}^k \norm{m_{n-l}}_2^2\right) + \frac{\alpha L k}{1 -\beta_1}\norm{g_{n-k}}_2^2\right).
\end{align*}
Taking the full expectation we have 
\begin{align}
\label{app:eq:esp_descent_sgd}
\esp{G_n^T m_n} &\geq \sum_{k=0}^{n-1} \beta_1^k \esp{G_{n-k}^T g_{n-k}}
        - \alpha L\sum_{k=1}^{n-1} \frac{\beta_1^k}{2} \left(\left(
        (1 - \beta_1) \sum_{l=1}^k \esp{\norm{m_{n-l}}_2^2} \right) + \frac{k}{1 -\beta_1}\esp{\norm{g_{n-k}}_2^2}\right).
\end{align}
Now let us take $k \in \{0, \ldots, n- 1\}$, first notice that
\begin{align*}
    \esp{G_{n-k}^T g_{n-k}} &= \esp{\esps{n-k-1}{\nabla F(x_{n- k -1})^T \nabla f_{n -k}(x_{n-k-1})}}\\
    &= \esp{\nabla F(x_{n- k -1})^T \nabla F(x_{n- k -1})}\\
    &= \esp{\norm{G_{n - k}}_2^2}.
\end{align*}
Furthermore, we have $\esp{\norm{g_{n-k}}_2^2} \leq R^2 + \sigma^2$ from \eqref{app:hyp:grad_bounded}, while $\esp{\norm{m_{n-k}}_2^2} \leq \frac{R^2 + \sigma^2}{(1 - \beta_1)^2}$ using \eqref{app:eq:bound_mn} from Lemma \ref{app:lemma:bound_mn}.
Injecting those three results in \eqref{app:eq:esp_descent_sgd}, we have
\begin{align}
\esp{G_n^T m_n} &\geq \sum_{k=0}^{n-1} \beta_1^k \esp{\norm{G_{n - k}}_2^2}
        - \alpha L (R^2 + \sigma^2)\sum_{k=1}^{n-1} \frac{\beta_1^k}{2} \left(\left(
        \frac{1}{1 - \beta_1} \sum_{l=1}^k 1\right) + \frac{k}{1 -\beta_1}\right)\\
        &= \sum_{k=0}^{n-1} \beta_1^k \esp{\norm{G_{n - k}}_2^2}
        - \frac{\alpha L}{1 - \beta_1} (R^2 + \sigma^2)\sum_{k=1}^{n-1}\beta_1^k k.
\end{align}
Now, using \eqref{app:eq:sum_geom_index} from Lemma \ref{app:lemma:sum_geom_index}, we 
obtain
\begin{align}
\esp{G_n^T m_n} &\geq 
     \sum_{k=0}^{n-1} \beta_1^k \esp{\norm{G_{n - k}}_2^2}
        - \frac{\alpha L \beta_1(R^2 + \sigma^2)}{(1 - \beta_1)^3},
\end{align}
which concludes the proof.

\end{proof}

\paragraph{Proof of Theorem~\ref{app:theo:sgd}}
\begin{proof}

Let us take a specific iteration $n \in \nat^*$.
Using the smoothness of $F$ given by \eqref{app:hyp:smooth}, we have,
\begin{align}
    &F(x_n) \leq F(x_{n-1}) - \alpha G_n^T m_n + \frac{\alpha^2 L}{2}\norm{m_n}^2_2.
\end{align}

Taking the expectation, and using Lemma~\ref{app:lemma:descent_sgd} and Lemma~\ref{app:lemma:bound_mn}, we get 
\begin{align}
    \esp{F(x_n)} &\leq \esp{F(x_{n-1})} - \alpha \left(\sum_{k=0}^{n-1} \beta_1^{k} \esp{\norm{ G_{n-k}}_2^2}\right) + 
    \frac{\alpha^2 L\beta_1 (R^2 + \sigma^2)}{(1 - \beta_1)^3}
    + \frac{\alpha^2 L (R^2 + \sigma^2)}{2 (1 - \beta_1)^2}\nonumber\\
    &\leq \esp{F(x_{n-1})} - \alpha \left(\sum_{k=0}^{n-1} \beta_1^{k} \esp{\norm{ G_{n-k}}_2^2}\right) +  \frac{\alpha^2 L (1 + \beta_1) (R^2 + \sigma^2)}{2 (1 - \beta_1)^3} 
\end{align}
rearranging, and summing over $n \in \{1, \ldots, N\}$, we get 
\begin{align}
    \underbrace{\alpha \sum_{n=1}^N\sum_{k=0}^{n-1} \beta_1^{k} \esp{\norm{G_{n-k}}_2^2}}_{A}
    \leq F(x_0) - \esp{F(x_N)}
    + N \frac{\alpha^2 L (1 + \beta_1) (R^2 + \sigma^2)}{2 (1 - \beta_1)^3} 
    \label{app:eq:sgd_almostthere}
\end{align}
Let us focus on the $A$ term on the left-hand side first. Introducing the change of 
index $i = n - k$, we get
\begin{align}
\nonumber
     A     &= \alpha  \sum_{n=1}^N\sum_{i=1}^{n} \beta_1^{n - i} \esp{\norm{ G_{i}}_2^2}\\\nonumber
     &= \alpha
        \sum_{i=1}^N \esp{\norm{ G_{i}}_2^2} \sum_{n=i}^{N}\beta_1^{n - i}\\
    &= \frac{\alpha}{1 - \beta_1}
        \sum_{i=1}^N \esp{\norm{\nabla F(x_{i - 1})}_2^2} (1 - \beta^{N - i + 1})\nonumber\\
    &= \frac{\alpha}{1 - \beta_1}
        \sum_{i=0}^{N-1} \esp{\norm{\nabla F(x_i)}_2^2} (1 - \beta^{N - i}).
        \label{app:eq:a_term}
\end{align}
We recognize the unnormalized probability 
given by the random iterate $\tau$ as defined by \eqref{app:eq:def_tau_sgd}.
The normalization constant is 
\begin{align*}
    \sum_{i=0}^{N - 1} 1 - \beta_1^{N - i}
    = N - \beta_1 \frac{1 - \beta_1^{N}}{1 - \beta}
    \geq N - \frac{\beta_1}{1 - \beta_1}
    = \tilde{N},
\end{align*}
which we can inject into \eqref{app:eq:a_term} to obtain
\begin{align}
    A  
    &\geq \frac{\alpha \tilde{N}}{1 - \beta_1}
        \esp{\norm{\nabla F(x_\tau)}_2^2}.
    \label{app:eq:sgd_A}
\end{align}
Injecting \eqref{app:eq:sgd_A} into \eqref{app:eq:sgd_almostthere},
and using the fact that $F$ is bounded below by $F_*$ \eqref{app:hyp:bounded_below_sgd}, we have
\begin{align}
\esp{\norm{\nabla F(x_\tau)}_2^2} &\leq
\frac{1 - \beta_1}{\alpha \tilde{N}} (F(x_0) - F_*)
+ \frac{N}{\tilde{N}} \frac{\alpha L (1 + \beta_1) (R^2 + \sigma^2)}{2 (1 - \beta_1)^2}  \\
\end{align}
which concludes the proof of Theorem~\ref{app:theo:sgd}.

\end{proof}

\end{document}